\title[Last iterate is slower than averaged iterate]{Last Iterate is Slower than Averaged Iterate in Smooth Convex-Concave Saddle Point Problems}
\def\bbzero{{\ensuremath{\mathbf 0}}}
\def\bbA{{\ensuremath{\mathbf A}}}
\def\bbv{{\ensuremath{\mathbf v}}}
\def\bbx{{\ensuremath{\mathbf x}}}
\def\bby{{\ensuremath{\mathbf y}}}
\def\bbz{{\ensuremath{\mathbf z}}}
\def\bb0{{\ensuremath{\mathbf 0}}}
\newcommand{\nc}{\newcommand}
\nc{\DMO}{\DeclareMathOperator}
\renewcommand{\todo}[1]{\ifnum\Comments=1 {\color{red}  [TODO: #1]}\fi}
\nc{\lng}{\langle}
\nc{\rng}{\rangle}
\DMO{\Reg}{Reg}
\DMO{\Ham}{Ham}
\DMO{\Gap}{Gap}
\DMO{\GD}{GD}
\DMO{\GDA}{GDA}
\DMO{\EG}{EG}
\DMO{\OGDA}{OGDA}
\nc{\algnst}[1]{\begin{align*}#1\end{align*}}
\nc{\algn}[1]{\begin{align}#1\end{align}}
\nc{\matx}[1]{\left(\begin{matrix}#1\end{matrix}\right)}
\renewcommand{\^}[1]{^{(#1)}}
\nc{\nuu}{\nu}
\DMO{\diag}{diag}
\DMO{\IC}{IC}
\nc{\pp}{1}
\nc{\MF}{\mathcal{F}}
\DMO{\bil}{bil}
\nc{\MFbil}{\MF^{\bil}}
\nc{\MFbiln}{\MF^{\bil}_{n}}
\nc{\ML}{\mathcal{L}}
\DMO{\funct}{Func}
\nc{\MLfunc}{\ML^{\funct}}
\nc{\MLham}{\ML^{\Ham}}
\nc{\MLgap}{\ML^{\Gap}}
\nc{\Ball}{\MB}
\nc{\bv}{\mathbf{v}}
\nc{\bX}{\mathbf{X}}
\nc{\bY}{\mathbf{Y}}
\nc{\bG}{\mathbf{G}}
\nc{\bg}{\mathbf{g}}
\nc{\bz}{\mathbf{z}}
\nc{\bw}{\mathbf{w}}
\nc{\bB}{\mathbf{B}}
\nc{\bA}{\mathbf{A}}
\nc{\bU}{\mathbf{U}}
\nc{\bJ}{\mathbf{J}}
\nc{\bK}{\mathbf{K}}
\renewcommand{\bb}{\mathbf{b}} 
\nc{\ba}{\mathbf{a}}
\nc{\bc}{\mathbf{c}}
\nc{\bC}{\mathbf{C}}
\nc{\BR}{\mathbb R}
\nc{\BA}{\mathbb{A}}
\nc{\BP}{\mathbb{P}}
\nc{\BC}{\mathbb C}
\nc{\bx}{\mathbf{x}}
\nc{\bS}{\mathbf{S}}
\nc{\bM}{\mathbf{M}}
\nc{\bR}{\mathbf{R}}
\nc{\bN}{\mathbf{N}}
\nc{\by}{\mathbf{y}}
\nc{\MO}{\mathcal O}
\nc{\MU}{\mathcal{U}}
\nc{\ME}{\mathcal{E}}
\nc{\MN}{\mathcal{N}}
\nc{\MK}{\mathcal{K}}
\nc{\MS}{\mathcal{S}}
\nc{\MT}{\mathcal{T}}
\nc{\BF}{\mathbb F}
\nc{\BQ}{\mathbb Q}
\nc{\MX}{\mathcal{X}}
\nc{\MA}{\mathcal{A}}
\nc{\MD}{\mathcal{D}}
\nc{\MB}{\mathcal{B}}
\nc{\MZ}{\mathcal{Z}}
\nc{\MY}{\mathcal{Y}}
\nc{\BZ}{\mathbb Z}
\nc{\BN}{\mathbb N}
\nc{\ep}{\epsilon}
\nc{\BH}{\mathbb H}
\nc{\BG}{\mathbb{G}}
\nc{\D}{\Delta}
\nc{\One}{\mathbbm{1}}
\nc{\bOne}{\mathbf{1}}
\nc{\SP}{\mathsf P}
\nc{\SQ}{\mathsf Q}
\nc{\DO}{\accentset{\circ}{\D}}
\nc{\mf}{\mathfrak}
\nc{\mfp}{\mathfrak{p}}
\nc{\mfq}{\mf{q}}
\nc{\Sp}{\mbox{Spec}}
\nc{\Spm}{\mbox{Specm}}
\nc{\hookuparrow}{\mathrel{\rotatebox[origin=c]{90}{$\hookrightarrow$}}}
\nc{\hookdownarrow}{\mathrel{\rotatebox[origin=c]{-90}{$\hookrightarrow$}}}
\nc{\hra}{\hookrightarrow}
\nc{\tra}{\twoheadrightarrow}
\nc{\sgn}{{\rm sgn}}
\nc{\aut}{{\rm Aut}}
\nc{\Hom}{{\rm Hom}}
\nc{\img}{{\rm Im}}
\DMO{\id}{Id}
\DMO{\supp}{supp}
\DMO{\KL}{KL}
\DMO{\BSS}{BSS}
\DMO{\BES}{BES}
\DMO{\BGS}{BGS}
\DMO{\poly}{poly}
\nc{\indep}{\perp}
\nc{\p}{\mathbb{P}}
\nc{\E}{\mathbb{E}}
\nc{\ra}{\rightarrow}
\renewcommand{\t}{\top}
\newtheorem{assumption}{\hspace{0pt}\bf Assumption}
\thanks{Supported by NSF Awards IIS-1741137, CCF-1617730 and CCF-1901292, by a Simons Investigator Award, by the DOE PhILMs project (No. DE-AC05-76RL01830), by the DARPA award HR00111990021, by a Google Faculty award, and by the MIT Frank Quick Faculty Research and Innovation Fellowship.} \Email{costis@csail.mit.edu}\\
\begin{document}

\maketitle

\begin{abstract}%
In this paper we study the smooth convex-concave saddle point problem. Specifically, we analyze the last iterate convergence properties of the Extragradient (EG) algorithm. It is well known that the ergodic (averaged) iterates of EG converge at a rate of ${O}(1/T)$ (\cite{nemirovski_prox-method_2004}). In this paper, we show that the last iterate of EG converges at a rate of ${O}(1/\sqrt{T})$. To the best of our knowledge, this is the first paper to provide a convergence rate guarantee for the last iterate of EG for the smooth convex-concave saddle point problem. Moreover, we show that this rate is tight by proving a lower bound of $\Omega(1/\sqrt{T})$ for the last iterate. This lower bound therefore shows a quadratic separation of the convergence rates of ergodic and last iterates in smooth convex-concave saddle point problems.
\end{abstract}

\begin{keywords}%
  Minimax optimization, Extragradient algorithm, Last iterate convergence
\end{keywords}

\section{Introduction}
In this paper we study the following saddle-point problem:
\begin{align}
\label{eq:main_problem}
\min_{\bbx \in \mathbb{R}^m} \max_{\bby \in \mathbb{R}^n} f(\bbx, \bby),
\end{align}
where the function $f$ is smooth, convex in $\bbx$, and concave in $\bby$. This problem is equivalent (\cite{facchinei_finite-dimensional_2003}) to finding a global saddle point of the function $f$, i.e., a point $(\bbx^*, \bby^*)$ such that:
\begin{align}
  \label{eq:saddle_point}
f(\bbx^*, \bby) \leq f(\bbx^*, \bby^*) \leq f(\bbx, \bby^*) \ \ \ \forall \ \bbx \in \mathbb{R}^m, \bby \in \mathbb{R}^n.
\end{align}
The saddle point problem (\ref{eq:main_problem}) arises in many fields. Besides its central importance in Game Theory, Online Learning and Convex Programming, it has recently found application in the study of generative adversarial networks (GANS)  (e.g.~\cite{goodfellow_generative_2014,arjovsky_wasserstein_2017}), adversarial examples (e.g.~\cite{madry_towards_2019}), robust optimization (e.g.~\cite{ben-tal_robust_2009}), and reinforcement learning (e.g.~\cite{du_stochastic_2017,dai_sbeed_2018}).

The convex-concave minimax problem~\eqref{eq:main_problem} is a special case of a {\it monotone variational inequality} (see Section \ref{sec:prelim}), which has been studied since the 1960s (\cite{hartman_non-linear_1966,browder_nonlinear_1965,lions_variational_1967,brezis_methodes_1968,sibony_methodes_1970}). Several first-order iterative algorithms to approximate the solution to a monotone variational inequality, including the Proximal Point (PP) algorithm (\cite{martinet_breve_1970,rockafellar_monotone_1976}), the extragradient (EG) algorithm (\cite{korpelevich_extragradient_1976}) and optimistic gradient descent-ascent (OGDA) (\cite{popov1980modification}), have been studied. It is known that the optimal rate of convergence for first-order methods for solving  monotone variational inequalities (and thus (\ref{eq:main_problem})) is $O(1/T)$, and this rate is achieved  by both the EG and OGDA algorithms (\cite{nemirovski_prox-method_2004,mokhtari_convergence_2019, hsieh_convergence_2019, monteiro_complexity_2010,auslender_interior_2005,tseng_accelerated_2008}). However, such convergence guarantees are only known for the {\it averaged (ergodic)} iterates: in particular, if $(\bbx_t, \bby_t)$ are the iterates generated by the EG or OGDA algorithm for the convex-concave problem (\ref{eq:main_problem}), the convergence rate of $O(1/T)$ is known for $(\bar \bx\^T, \bar \by\^T) := (\frac{1}{T} \sum_{t=1}^T\bbx\^t, \frac{1}{T} \sum_{t=1}^T \bby\^t)$.

The EG and OGDA algorithms have additionally received significant recent attention due to their ability to improve the training dynamics in GANs (\cite{chavdarova_reducing_2019,gidel_variational_2018,gidel_negative_2018,liang_interaction_2018,yadav_stabilizing_2017,daskalakis_training_2017}). 
In the saddle point formulation of GANs, given by (\ref{eq:main_problem}), the parameters $\bbx$ and $\bby$ correspond to parameters of the generator and the discriminator, which are usually represented by neural networks, and therefore the function $f$ is {\it not} convex-concave. The goal in such a case is to find a point $(\bx^*, \by^*)$ which satisfies a saddle-point property such as (\ref{eq:saddle_point}) {\it locally}. However, since $f$ is not convex-concave, few, if any, theoretical guarantees are known for the averaged iterates $(\bar \bx_T, \bar \by_T)$; indeed, in practice, the {\it last iterates} $(\bx\^T, \by\^T)$ typically have reasonably good performance. 

Several works including \cite{korpelevich_extragradient_1976,facchinei_finite-dimensional_2003,mertikopoulos_optimistic_2018} prove that, in the convex-concave case, $\lim_{T \ra \infty} (\bx\^T, \by\^T) = (\bx^*, \by^*)$ where $(\bx\^T, \by\^T)$ are the iterates of EG or OGDA, but they do not establish an upper bound on the {\it convergence rate} of the quality of the solution $(\bx\^T, \by\^T)$ to that of $(\bx^*, \by^*)$. Such a convergence rate is known for the {\it best iterate} among $(\bx\^1, \by\^1), \ldots, (\bx\^T, \by\^T)$ for each $T \in \BN$ (\cite{facchinei_finite-dimensional_2003,monteiro_complexity_2010,mertikopoulos_optimistic_2018}), but not on the {\it last iterate} $(\bx\^T, \by\^T)$. Finally, in the case that $f$ is {\it strongly} convex-{\it strongly} concave, linear convergence rates on the distance between the last iterate and the global min-max point (namely, $\| (\bx\^T, \by\^T) - (\bx\^*, \by\^*) \|$) are known (\cite{tseng_linear_1995,gidel_variational_2018,liang_interaction_2018,mokhtari_unified_2019}), but to the best of our knowledge, before our work there were no known convergence rates for the last iterate of EG in the absence of strong convexity. In this paper, we prove the following tight last-iterate convergence guarantees for the EG algorithm in the unbounded setting for different termination criteria including the primal-dual gap and Hamiltonian: 
\begin{theorem}[Last iterate rate for EG; informal version of Theorem \ref{thm:eg_ub}]
  \label{thm:eg_ub_informal}
  The EG algorithm has a last-iterate convergence rate of $O(1/\sqrt{T})$ for monotone variational inequalities satisfying first and second order smoothness; this convergence holds when measured with respect to either the square root of the Hamiltonian (Definition \ref{def:ham}) or the primal-dual gap (Definition \ref{def:pd_gap}). 
\end{theorem}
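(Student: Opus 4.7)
The plan is to control the Hamiltonian $H_t := \tfrac{1}{2}\|F(\bbz\^t)\|^2$, where $F := (\nabla_\bbx f, -\nabla_\bby f)$ is the saddle operator and $\bbz\^t$ are the EG iterates. The argument rests on two pillars. First, a \emph{summability} bound $\sum_{t=0}^{T-1} H_t \le C_0$ with $C_0$ depending only on $\|\bbz\^0 - \bbz^*\|^2$, $L$, and the step size $\eta$. This follows from the classical Fej\'er-monotonicity analysis of EG: $L$-Lipschitzness together with monotonicity of $F$ give $\|\bbz\^{t+1} - \bbz^*\|^2 \le \|\bbz\^t - \bbz^*\|^2 - (1-\eta^2 L^2)\,\eta^2\,\|F(\bbz\^t)\|^2$, which telescopes. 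Second, a \emph{monotonicity} lemma $H_{t+1} \le H_t$ along the trajectory. Combining the two, $T \cdot H_T \le \sum_{t=0}^{T-1} H_t \le C_0$, so $\sqrt{H_T} = O(1/\sqrt{T})$.

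The main obstacle is the Hamiltonian monotonicity, and this is where second-order smoothness (Lipschitzness of $\nabla F$ with constant $\Lambda$) enters; first-order smoothness is not enough. A natural warm-up is the Proximal Point method, for which $H_{t+1}\le H_t$ follows directly from monotonicity of $F$: since $\bbz\^{t+1} - \bbz\^t = -\eta F(\bbz\^{t+1})$, one has $\langle F(\bbz\^{t+1}) - F(\bbz\^t),\, F(\bbz\^{t+1})\rangle \le 0$, hence $\|F(\bbz\^{t+1})\|\le\|F(\bbz\^t)\|$. For EG the story is more delicate because $\bbz\^{t+1}$ is not a proximal step from $\bbz\^t$. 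I would view EG as a second-order-accurate approximation to the proximal step and expand
\[
 F(\bbz\^{t+1}) - F(\bbz\^{t+1/2}) = \nabla F(\bbz\^{t+1/2})(\bbz\^{t+1}-\bbz\^{t+1/2}) + R_1,\qquad \|R_1\| \le \tfrac{\Lambda}{2}\|\bbz\^{t+1}-\bbz\^{t+1/2}\|^2,
\]
and analogously for $F(\bbz\^t) - F(\bbz\^{t+1/2})$. The leading linear terms combine to yield a PP-style cancellation using PSD-ness of the symmetric part of $\nabla F$ (implied by monotonicity of $F$), while $\Lambda$ controls the $O(\eta^3)$ remainders; for $\eta$ small relative to $1/L$ and $\|F(\bbz\^0)\|/\Lambda$, the remainder is dominated by the negative leading term and $H_{t+1}\le H_t$ follows.

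Finally, to pass from the Hamiltonian bound to the primal-dual gap in the unbounded setting, I would use the standard restricted gap relative to a ball of radius $R$ (chosen in terms of $D := \|\bbz\^0 - \bbz^*\|$). By convex-concavity, for any $\bbz' = (\bbx',\bby')$ in that ball,
\[
  f(\bbx\^T,\bby') - f(\bbx',\bby\^T) \le \langle F(\bbz\^T),\,\bbz\^T - \bbz'\rangle \le (D+R)\,\|F(\bbz\^T)\|,
\]
where I also used Fej\'er monotonicity $\|\bbz\^T - \bbz^*\| \le D$. Taking a supremum over $\bbz'$ converts the $O(1/\sqrt{T})$ bound on $\|F(\bbz\^T)\|$ into the same rate for the restricted primal-dual gap, completing the plan.
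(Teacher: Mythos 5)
Your overall skeleton matches the paper's: first a Fej\'er-type summability bound $\sum_{t<T}\eta^2\|F(\bbz\^t)\|^2 \le D^2/(1-\eta^2L^2)$ (both you and the paper invoke essentially the same telescoping inequality), then an argument controlling $\|F(\bbz\^t)\|$ forward in time, then the convexity-concavity step to pass to the restricted primal-dual gap. The last step is fine as written and is essentially verbatim what the paper does.

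The gap is in your second pillar. You claim \emph{exact} Hamiltonian monotonicity $\|F(\bbz\^{t+1})\|\le\|F(\bbz\^t)\|$ for EG and then conclude via $T\,H_T\le\sum_{t<T}H_t$. The paper does not prove exact monotonicity for EG, and the structure of its key lemma strongly suggests it cannot: the paper shows (via a fundamental-theorem-of-calculus expansion, Lemma \ref{lem:ab_exist}, followed by Lemma \ref{lem:ab_diff}) only that
\[
\|F(\bbz\^{t+1})\|^2 \;\le\; \bigl(1 + 26\,\eta^2\|\bbA_{\bbz\^t}-\bbB_{\bbz\^t}\|_\sigma^2\bigr)\,\|F(\bbz\^t)\|^2 \;\le\; \bigl(1 + O(\eta^4\Lambda^2\|F(\bbz\^t)\|^2)\bigr)\,\|F(\bbz\^t)\|^2,
\]
a multiplicative factor that is $\ge 1$. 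Your heuristic that ``the remainder is dominated by the negative leading term'' breaks precisely on the hard instances: when $\partial F$ is (nearly) skew-symmetric, the PSD symmetric part of $\nabla F$ contributes essentially nothing, so there is no $O(\eta)$-scale negative leading term to dominate a remainder that can have either sign. (In the exactly bilinear case $\bbA_\bz=\bbB_\bz$, one does get nonincrease from the $\eta^2\bbA^2$ term, but the nonlinear correction ruins the sign for general smooth monotone $F$.) Also, your stated step-size constraint ``$\eta$ small relative to $\|F(\bbz\^0)\|/\Lambda$'' points the wrong way: the condition the paper actually needs is $\eta \le 5/(\Lambda D)$, i.e.\ $\eta\Lambda D\lesssim 1$.

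The fix, which is the paper's actual argument, is to drop exact monotonicity and combine the two ingredients differently. Use summability to produce a \emph{best iterate} $t^*\in\{0,\dots,T-1\}$ with $\|F(\bbz\^{t^*})\|^2 \le D^2/(T\eta^2(1-\eta^2L^2))$, then show inductively that for $t\in\{t^*,\dots,T\}$ the multiplicative blowup accumulated since $t^*$ stays bounded by a constant: because each $\|F(\bbz\^t)\|^2\le 2D^2/(\eta^2 T)$ implies the per-step factor is at most $1+\Lambda^2\eta^2D^2/(50T)$, and with $\eta\le 5/(\Lambda D)$ the product over at most $T$ steps is at most $2$. That gives $\|F(\bbz\^T)\|\le 2D/(\eta\sqrt{T})$ without ever asserting $H_{t+1}\le H_t$. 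Note also that with only approximate nonincrease your inequality $T\,H_T\le\sum_{t<T}H_t$ no longer follows and has to be replaced by the comparison to the best iterate $H_T\le 2H_{t^*}\lesssim D^2/(\eta^2 T)$.
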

Theorem \ref{thm:eg_lb_informal} shows that the rate of Theorem \ref{thm:eg_ub_informal} is tight. Moreover, it establishes a quadratic separation between the last iterate of the extragradient algorithm (which converges at a rate of $O(1/\sqrt T)$) and the averaged iterate (which converges at a rate of $O(1/T)$).
\begin{theorem}[Lower bound for 1-SCLIs; informal version of Theorem \ref{thm:1scli_lb}]
  \label{thm:eg_lb_informal}
The $O(1/\sqrt{T})$ last-iterate upper bound of Theorem \ref{thm:eg_ub_informal} is tight for all 1-stationary canonical linear iterative methods (which includes EG; see Definition \ref{def:scli}). 
\end{theorem}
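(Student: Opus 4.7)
The plan is to exploit the characterization of 1-SCLI methods as polynomial matrix iterations and construct a bilinear adversarial instance parameterized by its coupling strength. Applied to a linear operator $F(\bbz)=\bJ\bbz + \bbc$, a 1-SCLI method $\mathcal{A}$ produces iterates $\bbz\^{t+1} = P_\mathcal{A}(\bJ)\bbz\^t + v_\mathcal{A}(\bJ,\bbc)$, where $P_\mathcal{A}$ is a fixed-degree polynomial whose coefficients may depend on algorithm parameters (and hence on an assumed smoothness bound $L$) but not on the problem instance. For the saddle point problem $f(\bbx,\bby)=\sigma\,\bbx^\top\bby$ with $\sigma\in(0,L]$, the associated operator is skew-symmetric with eigenvalues $\pm i\sigma$, so on the two invariant eigenspaces the iterate norms are governed by the scalar factor $|P_\mathcal{A}(\pm i\sigma)|^T$.

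Next I would extract structural constraints on $P_\mathcal{A}$. Consistency (every saddle point must be a fixed point of the method) forces $P_\mathcal{A}(0)=1$, while convergence throughout the $L$-smooth bilinear class demands $|P_\mathcal{A}(i\sigma)|\le 1$ for every $\sigma\in[0,L]$. The key step is a universal extremal bound of the form $|P_\mathcal{A}(i\sigma)|^2 \ge 1 - C\sigma^2/L^2$ for all $\sigma\in[0,L]$, with an absolute constant $C$ depending only on the degree allowed by the 1-SCLI class. Since $P_\mathcal{A}$ has real coefficients and $P_\mathcal{A}(0)=1$, the only mechanism producing decay at first order in $\sigma^2$ is a real negative quadratic term in the Taylor expansion of $P_\mathcal{A}(i\sigma)$ at $\sigma=0$, and the sup-norm constraint on $[0,L]$ caps the magnitude of this term at $O(1/L^2)$ via a Markov/Chebyshev-type estimate on bounded-degree polynomials.

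With this spectral lower bound in hand, starting from $\bbz\^0=(1,0)$ one obtains $\|\bbz\^T\|^2\ge (1-C\sigma^2/L^2)^{2T}$, so the Hamiltonian satisfies $\|F(\bbz\^T)\|^2 = \sigma^2\|\bbz\^T\|^2 \ge \sigma^2(1-C\sigma^2/L^2)^{2T}$. Tuning $\sigma^2 \asymp L^2/T$ yields $\|F(\bbz\^T)\|^2 = \Omega(L^2/T)$, i.e., $\sqrt{\Ham(\bbz\^T)}=\Omega(L/\sqrt{T})$, matching the upper bound of Theorem \ref{thm:eg_ub_informal}. The primal-dual gap bound should follow from the same construction after localizing to a bounded domain containing $\bbz\^0$, since on a bilinear instance the gap is comparable to $\sigma\|\bbz\^T\|^2$ up to the domain radius.

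The main obstacle is the universal polynomial estimate in the second step: one must rule out that clever higher-order tuning lets some 1-SCLI achieve $|P_\mathcal{A}(i\sigma)|^2 \le 1 - \omega(\sigma^2/L^2)$ uniformly over $[0,L]$, because any such polynomial would allow a method breaking the $\Omega(1/\sqrt{T})$ barrier on the adversarial family. This reduces to an extremal problem for polynomials restricted to the imaginary segment $[0,iL]$ under the normalization $P(0)=1$ and the sup-norm bound $|P|\le 1$ on that segment. Secondary care is needed to verify that Definition \ref{def:scli} genuinely captures $P_\mathcal{A}$ as a bounded-degree matrix polynomial in $\bJ$, so that EG (via its degree-$2$ polynomial $1-\eta x+\eta^2 x^2$) lies in the class whose lower bound we are proving, and to confirm that the adversarial instance $(\sigma,\bbz\^0)$ satisfies the smoothness hypotheses underlying Definitions \ref{def:ham} and \ref{def:pd_gap}.
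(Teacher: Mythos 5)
Your plan follows the same high-level blueprint as the paper's actual proof: instantiate a bilinear adversary $f(\bx,\by)=\sigma\,\bx^\top\by$ so that the 1-SCLI iteration reduces, on the two invariant eigenspaces of the skew-symmetric $\bA$, to a scalar power $q(\pm i\sigma)^T$ of a real-coefficient polynomial $q=q_0$ with $q(0)=1$ (this last normalization is exactly the consequence of consistency, Lemma~\ref{lem:consistency}) of degree at most $k$; then one needs an extremal polynomial estimate preventing $|q(i\sigma)|$ from decaying faster than $1-O(k^2\sigma^2/L^2)$. Your identification of the reduction and of the key extremal obstacle is correct, and the Markov inequality you invoke is a legitimate tool: applying it to the degree-$k$ polynomial $u\mapsto R(u):=|q(i\sqrt u)|^2$, which has $R(0)=1$, does give the pointwise bound $R(u)\ge 1-2k^2u/L^2$ on $[0,L^2]$ --- \emph{provided} $|R|\le 1$ on that interval.

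That sup-norm hypothesis is the genuine gap. You justify it as ``convergence throughout the $L$-smooth bilinear class forces $|q(i\sigma)|\le 1$ on $[0,L]$,'' but the class the theorem is about consists of \emph{consistent} 1-SCLIs with $\bN(\cdot)$ of bounded polynomial degree, and consistency only gives $\bC_0(\bA)=I+\bN(\bA)\bA$, i.e.\ $q(0)=1$ --- it does not bound the spectral radius. A cheating algorithm could have $|q(i\sigma)|>1$ on part of $[0,L]$ (so it eventually diverges on those instances) yet still achieve a small Hamiltonian at a \emph{fixed} horizon $T$ against every instance, and Markov's inequality says nothing there. Nor is this a harmless degenerate case: if the violation occurs only at tiny $\sigma$, the ``bad instance'' has small $\sigma^2|q(i\sigma)|^{2T}$ anyway and does not save the lower bound. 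The paper avoids the issue entirely by lower-bounding $|q(i\sigma)|\ge|\Re q(i\sigma)|$, where $r(\sigma^2):=\Re q(i\sigma)$ is a degree-$\lfloor k/2\rfloor$ polynomial with $r(0)=1$, and then applying a Chebyshev-type \emph{sup-over-$\sigma$} lower bound (Lemmas~\ref{lem:k2} and \ref{lem:k2_condition}) that assumes nothing about the sup-norm of $r$: no bounded-degree $r$ with $r(0)=1$ can be uniformly small on $[\mu,L^2]$ once $\mu\gtrsim L^2/(Tk^2)$. Swapping your Markov/pointwise step for the Chebyshev/sup step is essentially the only repair needed, and it also yields a slightly better $k$-dependence (degree $\lfloor k/2\rfloor$ rather than $k$).

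Two smaller points. First, the constant in your bound must scale as $k^2$; you gesture at this but should make it explicit, since it is what distinguishes, say, EG from higher-order extrapolation methods. Second, your primal-dual gap heuristic is off for the bilinear instance: restricted to the ball of radius $D$ about the Nash point, the gap equals $D\big(\|\bM^\top\bx+\bb_2\|+\|\bM\by+\bb_1\|\big)$, which is $\Theta(D\,\|F(\bz)\|)$ --- linear, not quadratic, in the iterate displacement --- so the gap lower bound is $D$ times the square root of the Hamiltonian lower bound (compare \eqref{eq:gap-ham-D} and item~2 of Theorem~\ref{thm:1scli_lb}).
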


\subsection{Related work}
  \paragraph{Upper bounds on last-iterate convergence rates.}
  Motivated by applications in GANs, several recent papers have focused on proving last-iterate convergence guarantees for various min-max optimization algorithms. Linear convergence rates have been established for EG, OGDA and several of their variants, in the bilinear case, where $f(\bx, \by) = \bx^\t \bM \by + \bb_1^\t \bx + \bb_2^\t \by$ 
  (\cite{daskalakis_training_2017,liang_interaction_2018,gidel_variational_2018, mokhtari_unified_2019, peng_training_2019,zhang_2020_convergence}). \cite{azizian_tight_2019} establishes a similar linear convergence rate for EG, OGDA, and consensus optimization (\cite{mescheder_numerics_2017}) applied to general convex-concave $f$ in the case that a global lower bound of $\gamma > 0$ is known on the singular values of the Jacobian of $\matx{\nabla_\bx f (\bx, \by) \\ -\nabla_\by f (\bx, \by)}$. \cite{daskalakis_last-iterate_2018} study the bilinear case where $\bx,\by$ are constrained to lie in the simplex and show that the iterates of the optimistic hedge algorithm converge to a global saddle point, without providing any rates of convergence.

  \cite{abernethy_last-iterate_2019} proved linear last-iterate convergence rates for Hamiltonian gradient descent when $f$ belongs to a class of `sufficiently bilinear' (possibly nonconvex-nonconcave) problems. Although their result does generalize the strongly convex-strongly concave and bilinear cases, it does not include the full generality of the convex-concave setting; moreover, as it requires computing derivatives of the Hamiltonian $ \| \nabla_\bx f(\bx\^t, \by\^t) \|^2 + \| \nabla_\by f(\bx\^t, \by\^t) \|^2$, it is a {\it second order} method. \cite{hsieh_convergence_2019} proved local linear convergence rates of OGDA to local saddle points in the neighborhood of which $f$ is strongly convex-strongly concave. 
  \cite{azizian2020accelerating} describe a class of convex-concave functions for which first-order algorithms such as EG can be accelerated locally (with linear rates). Several recent works (\cite{gidel_variational_2018,gidel_negative_2018,bailey_finite_2019}) analyze alternating gradient descent-ascent and show that the iterates neither converge or diverge, but rather cycle infinitely in a bounded set. Finally, there are several works (\cite{shamir_stochastic_2012,jain_making_2019,ge_step_2019}) in the literature on {\it non-smooth} convex minimization that compare the convergence of the last iterate and the averaged iterate; the algorithms considered in these papers require decaying step-sizes in order to achieve last-iterate convergence, and so are not directly comparable to our results

\paragraph{Lower bounds.}
Using lower bounds for non-smooth convex minimization (\cite{nemirovsky_information-based_1992}) as a black box,  \cite{nemirovski_prox-method_2004} gives a lower bound of $\Omega(1/T)$ for first-order methods for the smooth convex-concave saddle point problem; this is achieved by, for instance, the EG algorithm with averaged iterates. \cite{ouyang_lower_2019} gave a direct proof of this fact, and extended it to the case where $\bx, \by$ are affinely constrained. The lower bounds of (\cite{nemirovski_prox-method_2004,ouyang_lower_2019}) rely on Krylov subspace techniques, and therefore only apply in the case where $T \leq n$, where $n$ is the dimension of the problem. \cite{azizian_tight_2019,ibrahim_linear_2019} amend this issue of dimension-dependence using the {\it canonical linear iterative (CLI) algorithm} framework of \cite{arjevani_iteration_2016}. The lower bounds in these papers focus primarily on the smooth and strongly-convex strongly-concave case, and proceed by lower bounding the spectral radius of the operator corresponding to a single iteration of a CLI algorithm. Independently \cite{zhang_lower_2019} developed similar lower bounds for the strongly-convex strongly-concave case.

A significant conceptual hurdle in establishing the tight lower bound of $\Omega(1/\sqrt{T})$ in Theorem \ref{thm:eg_lb_informal} is that averaging the iterates of EG produces the asymptotically faster rate of $O(1/T)$. Thus, the framework for our lower bound must rule out such averaging schemes; we do so by proving lower bounds for {\it stationary} CLI (i.e., SCLI) algorithms, i.e., the iterations are time invariant. The class of SCLI algorithms for which our lower bound applies is essentially the same as that of \cite[Theorem 5]{azizian_tight_2019}.

\paragraph{Outline} In Section \ref{sec:prelim} we formally define the problem considered in this paper and introduce some notation. In Section \ref{sec:lb}, we derive a lower bound for the last iterate of 1-SCLI algorithms, of which EG is a special case, establishing Theorem \ref{thm:eg_lb_informal}. 
In Section \ref{sec:ub}, we derive an upper bound for the last iterate of the EG algorithm under first and second-order smoothness assumptions, establishing Theorem \ref{thm:eg_ub_informal}. 

\section{Preliminaries}\label{sec:prelim}
\noindent \textbf{Notation.} Lowercase boldface (e.g., $\bbv$) denotes a vector and uppercase boldface (e.g., $\bbA$) denotes a matrix. We use $\|\bbv\|$ to denote the Euclidean norm of vector $\bbv$. Throughout this paper we will be considering a function $f : \MX \times \MY \ra \BR$, for convex domains $\MX \subseteq \BR^{n_\bx}, \MY \subseteq \BR^{n_\by}$, for some $n_\bx, n_\by \in \BN$. Write $n = n_\bx + n_\by$. We will often write $\MZ := \MX \times \MY$ and $\bz := (\bx, \by)$ as the concatenation of the vectors $\bx, \by$. The gradient of $f$ with respect to $\bbx$ and $\bby$ at $(\bbx_0,\bby_0)$ are denoted by $\nabla_\bbx f(\bbx_0,\bby_0)$ and $\nabla_\bby f(\bbx_0,\bby_0)$, respectively. For a matrix $\bA \in \BR^{n \times n}$, $\| \bA \|_\sigma$ denotes its spectral norm, i.e., the largest singular value of $\bA$. For symmetric matrices $\bA, \bB$, we write  $\bA \preceq \bB$ if $\bB - \bA$ is positive semidefinite (PSD). The {\it diameter} of $\MZ \subset \BR^n$ is $\sup_{\bz, \bz' \in \MZ} \| \bz - \bz'\|$. For a vector $\bz\in \BR^n$ and $D > 0$, let $\Ball(\bz, D)$ denote the Euclidean ball centered at $\bz$ with radius $D$. For a complex number $w \in \BC$, write $\Re(w), \Im(w)$, respectively, to denote the real and imaginary parts of $w$; thus $w = \Re(w) + i\Im(w)$. 

We assume throughout this paper that the function $f(\bbx, \bby)$ is twice differentiable.
To the function $f : \MZ \ra \BR$ we associate an operator $F_f : \MZ \ra \BR^n$, defined by $F_f(\bx, \by) := \matx{\nabla_\bx f(\bx, \by) \\ - \nabla_\by f(\bx, \by)}$. We usually omit the subscript when the function $f$ is clear. It is well-known (\cite{facchinei_finite-dimensional_2003}) that if $f$ is convex-concave, then $F$ is {\it monotone}, meaning that for all $\bz, \bz' \in \MZ$, we have $\lng F(\bz) - F(\bz') , \bz - \bz' \rng \geq 0$. In this case, it is well-known (\cite{facchinei_finite-dimensional_2003}) that a point $\bz^* = (\bx^*, \by^*) \in \MZ$ satisfies the global saddle point property (\ref{eq:saddle_point}) if and only if
\begin{equation}
  \label{eq:vi}
  \langle F(\bz^*), \bz - \bz^* \rangle \geq 0 \quad \forall \bz \in \MZ.
\end{equation}
Finding a point $\bz^*$ satisfying (\ref{eq:vi}) is known as the {\it variational inequality} problem corresponding to $F$. 
To measure the quality of a solution $\bz = (\bx, \by)$ for the saddle point problem (\ref{eq:main_problem}) or equivalently the variational inequality (\ref{eq:vi}) given by a function $f$, two measures are typically used in the literature (see, e.g., \cite{nemirovski_prox-method_2004,monteiro_complexity_2010,mokhtari_convergence_2019}). The first is the {\it Hamiltonian}, which is equal to the squared norm of the gradient of $f$ at $(\bx, \by)$.
\begin{definition}[Hamiltonian]
  \label{def:ham}
For a function $f : \MZ \ra \BR$, the {\bf Hamiltonian}\footnote{Often there is an additional factor of $\frac 12$ multiplying $\| F_f(\bz) \|^2$ in the definition of the Hamiltonian (see, e.g., \cite{abernethy_last-iterate_2019}), but for simplicitly we opt to drop this factor. We do not use any physical interpretation of the Hamiltonian in this paper.} of $f$ at $(\bx, \by) \in \MZ$ is:
\begin{align}
\Ham_f(\bx, \by) := \| \nabla_\bx f(\bx, \by) \|^2 + \| \nabla_\by f(\bx, \by) \|^2 = \| F_f(\bz) \|^2.\nonumber
\end{align}
\end{definition}
Note that if $(\bx, \by)$ is a global saddle point of (\ref{eq:main_problem}), then $\Ham_f(\bx, \by) = 0$. 

The second quality measure of $(\bx, \by)$ is the {\it primal-dual gap}, which measures the amount by which $\by$ fails to maximize $f(\bx, \cdot)$ and by which $\bx$ fails to minimize $f(\cdot, \by)$. 
\begin{definition}[Primal-Dual Gap]
  \label{def:pd_gap}
  For $f : \MZ\ra \BR$, and some convex region $\MX' \times \MY' \subseteq \MZ$, the {\bf primal-dual gap} at $(\bx, \by) \in \MZ$ with respect to $\MX' \times \MY'$ is:
  \begin{align}
    \label{eq:pd_gap}
  \Gap_f^{\MX' \times \MY'}(\bbx,\bby) = \max_{\bby' \in \MY'} f(\bbx, \bby') - \min_{\bbx' \in \MX'} f(\bbx', \bby).
  \end{align}
  When the set $\MX' \times \MY'$ is clear from context, we shall write $\Gap_f(\bbx, \bby)$. 
\end{definition}
As we work in the unconstrained setting, usually we will have $\MZ = \BR^n$. In such a case, we cannot obtain meaningful guarantees on the primal-dual gap with respect to the set $\MX' \times \MY' = \MZ = \BR^n$, since the gap may be infinite, if, for instance, $f$ is bilinear. Thus, in the unconstrained setting, it is necessary to restrict $\MX' \times \MY'$ to be a compact set; following (\cite{mokhtari_convergence_2019}), for our upper bounds, we will usually consider the primal-dual gap with respect to the set $\MX' \times \MY'$ for $\MX' = \Ball(\bx^*, D), \MY' = \Ball(\by^*, D)$ for some $D > 0$. 
As highlighted in (\cite{mokhtari_convergence_2019}), the iterates $(\bx\^t, \by\^t)$ of many convergent first-order algorithms, including EG and PP, lie in $\Ball(\bx^*, O(\| \bx\^0 - \bx^* \|)) \times \Ball(\by^*, O(\| \by\^0 - \by^* \|))$. Thus, choosing $D = O(\| \bx^* - \bx\^0\| + \| \by^* - \by\^0 \|)$ ensures that the set $\MX' \times \MY'$ contains the convex hull of all the iterates $(\bx\^t, \by\^t)$. 

\section{Lower bound for first-order $1$-SCLI algorithms}\label{sec:lb}
In this section we prove lower bounds for the convergence of a broad range of first order algorithms including the EG algorithm for the convex-concave problem saddle point problem (\ref{eq:main_problem}). The class of ``hard functions'' we use to prove our lower bounds are simply bilinear (and thus convex-concave) functions of the form:
\begin{equation}
  \label{eq:quadratic_f}
  f(\bx, \by) = 
  \bx^\t \bM \by + \bb_1^\t \bx + \bb_2^\t \by,
\end{equation}
where $\bb_1, \bb_2, \bx, \by \in \BR^{n/2}$ for some even $n \in \BN$, 
and $\bM \in \BR^{n/2 \times n/2}$ is a square matrix. Then the monotone operator $F = F_f : \BR^n \ra \BR^n$ corresponding to $f$ is of the form
\begin{equation}
  \label{eq:linear_F}
F(\bz) = \bA \bz + \bb \quad \text{where} \quad \bz = \matx{\bx \\ \by}, \bA = \matx{\bbzero & \bM \\ -\bM^\t & \bbzero}, \bb = \matx{\bb_1 \\ -\bb_2}.
\end{equation}
{\bf Remark.} We will assume that the first iterate $\bz\^0$ of all 1-SCLIs considered in this paper is $\bbzero \in \BR^n$; this assumption is without loss of generality, since we can modify $f$ by applying a translation of $\bx, \by$ in (\ref{eq:quadratic_f}) to make this assumption hold for any given $\MA$. 

For $L, D > 0$, we denote the set of $L$-Lipschitz operators $F$ of the form in (\ref{eq:linear_F}) for which $\bM$, and therefore, $\bA$, is of full rank, and for which $ \| \bA^{-1} \bb\| = D$, by $\MFbil_{n,L,D}$.  The parameter $D$ represents the distance between the initialization (namely, $\bbzero$) and the optimal point $\bz^*$ 
, and also measures the diameter of the balls $\MX, \MY$ with respect to which the primal-dual gap is computed for our lower bounds. As discussed in the previous section, this choice of $\MX, \MY$ is motivated by the fact that for   many convergent algorithms such as EG and PP, the iterates never leave $\MX, \MY$. (We also use the same convention for our upper bounds.) 
For $F \in \MFbil_{n,L,D}$, letting $f : \BR^n \ra \BR$ be such that $F = F_f$, there is a unique global min-max point for $f$, which is given by $\bz^* = -\bA^{-1} \bb$. 

Now we are ready to introduce the class of optimization algorithms we consider, namely $\pp$-stationary canonical linear iterative algorithms:
\begin{definition}[$1$-SCLI algorithms, \cite{arjevani_lower_2015}, Definition 1]
  \label{def:scli}
  An algorithm $\MA$ producing iterates $\bz\^0, \bz\^1, \ldots, \in \BR^n$ with access to a monotone first order oracle $F$ is called a {\bf $\pp$-stationary canonical linear iterative ($\pp$- SCLI)} optimization algorithm\footnote{The ``1'' in ``1-SCLI'' denotes that $\bz\^t$ depends only on the previous iterate $\bz\^{t-1}$.} over $\BR^n$ if when $F(\bz) = \bA \bz + \bb$ for some $\bA \in \BR^{n \times n}, \bb \in \BR^n$, the iterates $\bz\^0, \bz\^1, \ldots$ take the form
  \begin{align}
            \label{eq:scli-definition}
            \bz\^t = \bC_0(\bA) \bz\^{t-1} + \bN(\bA) \bb, \quad t \geq 1,
  \end{align}
  for some mappings $\bC_0, \bN : \BR^{n \times n} \ra \BR^{n \times n}$ and initial vector $\bz\^0 \in \BR^n$.

  When we wish to show the dependence of the iterates $\bz\^t$ on the monotone mapping $F$ of (\ref{eq:linear_F}) explicitly, we shall write $\bz\^t(F)$. 
\end{definition}

Notice that EG with constant step size $\eta > 0$, is a 1-SCLI, as its updates given an operator $F$ of the form in (\ref{eq:linear_F}) are of the form
\begin{equation}
  \label{eq:eg_scli}
\bz\^t =\bz\^{t-1} - \eta (\bA(\bz\^{t-1} - \eta (\bA\bz\^{t-1} + \bb)) + \bb) = (I - (\eta \bA) + (\eta \bA)^2) \bz\^{t-1} - (I - \eta \bA)\eta \bb.
\end{equation}

In contrast to minimization, in which it is natural to measure the quality of the iterates $\bz\^t$ via the function value, there are multiple quality measures, including the Hamiltonian $\Ham_f(\cdot)$ (Definition \ref{def:ham}) and the primal-dual gap $\Gap_f(\cdot)$ (Definition \ref{def:pd_gap}), for the setting of min-max optimization. We will refer to such a quality measure as  {\it loss function}, formalized as a mapping $\ML : \BR^n \ra \BR_{\geq 0}$; note that $\ML$ in general depends on $F$.
\begin{definition}[Iteration complexity, \cite{arjevani_lower_2015}]
  \label{def:ic}
Fix $L, D > 0$, and  let $\MA$ be a 1-SCLI algorithm for the saddle point problem for $f$ as in (\ref{eq:quadratic_f}), whose description may depend on $L, D$. Suppose, for each $F \in \MFbil_{n,L,D}$, $\MA$ produces iterates $\bz\^t(F) \in \BR^n$ and suppose an objective (loss) function $\ML_F : \BR^n \ra \BR_{\geq 0}$ is given. Then the {\bf iteration complexity} of $\MA$ at time $T$ and loss functions $\ML_F$, denoted $\IC_{n,L,D}(\MA, \ML; T)$, is defined as follows:
\begin{equation}
  \label{eq:ic_def}
\IC_{n,L,D}(\MA, \ML; T) := \sup_{F \in \MFbil_{n,L,D}} \left\{ \ML_F(\bz\^T(F)) \right\}.
  \end{equation}
\end{definition}
Definition \ref{def:ic} is slightly different from other definitions of iteration complexity in the literature on convex minimization  (\cite{arjevani_lower_2015,nemirovsky_information-based_1992}), in that $\IC_{n,L,D}(\MA, \ML; T)$ is often replaced with the potentially larger quantity $\sup_{t \geq T} \{ \IC_{n,L,D}(\MA, \ML; t) \}$. 
However, since our goal in this section is to prove {\it lower bounds} on the iteration complexity, our results in terms of (\ref{eq:ic_def}) are stronger than those with this alternative definition of iteration complexity.\footnote{This additional strength of our results rules out an algorithm which achieves small loss at iteration $T$ for any function $F$, but has large loss at some iteration $T' > T$. This additional strength to our lower bound could be useful given the cyclical nature of the iterates of many min-max algorithms.}

Finally, we formalize the following convergence property of 1-SCLIs:
\begin{definition}[Consistency, \cite{arjevani_lower_2015}, Definition 3]
  \label{def:consistency}
A $1$-SCLI optimization algorithm $\MA$ is {\bf consistent with respect to an invertible matrix $\bA$} if for any $\bb \in \BR^n$, the iterates $\bz\^t$ of $\MA$ converge to $-\bA^{-1} \bb$. $\MA$ is called {\bf consistent} if it is consistent with respect to all (full-rank) $\bA$ of the form (\ref{eq:linear_F}). 
\end{definition}

We shall need the following consequence of consistency. 
\begin{lemma}[\cite{arjevani_lower_2015}, Theorem 5]
  \label{lem:consistency}
If a $1$-SCLI optimization algorithm $\MA$ is consistent with respect to $\bA$, then 
\begin{equation}
  \label{eq:consistency_eq}
  \bC_0(\bA)= I + \bN(\bA) \bA.
\end{equation}
  \end{lemma}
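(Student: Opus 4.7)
The plan is to take limits in the SCLI recursion and exploit that consistency must hold for every $\bb \in \BR^n$, which will upgrade a vector identity into the desired matrix identity.

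First I would fix an arbitrary $\bb \in \BR^n$ and consider the affine operator $F(\bz) = \bA\bz + \bb$. By consistency with respect to $\bA$, the iterates $\bz\^t$ converge to $\bz^* = -\bA^{-1}\bb$. Since $\bC_0(\bA)$ and $\bN(\bA)$ are fixed matrices, the map $\bz \mapsto \bC_0(\bA)\bz + \bN(\bA)\bb$ is continuous, so I can pass to the limit in the SCLI update rule \eqref{eq:scli-definition} to obtain the fixed-point equation
\begin{equation*}
\bz^* = \bC_0(\bA)\,\bz^* + \bN(\bA)\,\bb.
\end{equation*}
Substituting $\bz^* = -\bA^{-1}\bb$ yields
\begin{equation*}
-\bA^{-1}\bb = -\bC_0(\bA)\,\bA^{-1}\bb + \bN(\bA)\,\bb.
\end{equation*}

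Next, I would use the fact that this identity must hold for every $\bb \in \BR^n$ (consistency is a property of all affine operators with the given $\bA$). Rearranging, the vector $\bigl(-\bA^{-1} + \bC_0(\bA)\bA^{-1} - \bN(\bA)\bigr)\bb$ vanishes for every $\bb$, so the corresponding matrix must be zero:
\begin{equation*}
\bC_0(\bA)\,\bA^{-1} = \bA^{-1} + \bN(\bA).
\end{equation*}
Multiplying both sides on the right by $\bA$ (which is invertible by assumption) gives exactly $\bC_0(\bA) = I + \bN(\bA)\,\bA$, as required.

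There is essentially no main obstacle here: the whole argument is a two-line manipulation once one observes that (i) convergence of the iterates lets one pass to the limit in the linear recursion, and (ii) the freedom to vary $\bb$ converts a single-vector equation into a matrix equation. The only small subtlety worth mentioning is that the step from the vector identity to the matrix identity requires the identity to hold for \emph{all} $\bb$, which is why the definition of consistency with respect to $\bA$ is stated as a universal quantifier over $\bb$ rather than only the particular $\bb$ at hand.
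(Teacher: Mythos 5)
Your proof is correct and is the standard argument for this result. The paper itself does not prove Lemma \ref{lem:consistency} but cites it to \cite{arjevani_lower_2015} (Theorem 5), where essentially the same reasoning appears: pass to the limit in the linear recursion \eqref{eq:scli-definition} using continuity of the affine update map, substitute the limit $\bz^* = -\bA^{-1}\bb$, and use the fact that consistency holds for \emph{all} $\bb$ to upgrade the vector identity to the matrix identity $\bC_0(\bA)\bA^{-1} = \bA^{-1} + \bN(\bA)$, which after right-multiplication by $\bA$ gives \eqref{eq:consistency_eq}. Your note about the universal quantifier over $\bb$ is exactly the right subtlety to flag; the only additional remark one might make is that passing to the limit uses that $\bz\^{t-1}$ and $\bz\^t$ converge to the same point, which is immediate from the definition of convergence of a sequence.
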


  \subsection{1-SCLI lower bound}
In this section we state Theorem \ref{thm:1scli_lb}, which gives a lower bound on the convergence rate of 1-SCLIs for convex-concave functions by considering functions $f$ of the form (\ref{eq:quadratic_f}). 
  \begin{theorem}[Iteration complexity lower bounds]
    \label{thm:1scli_lb}
    Let $\MA$ be a consistent 1-SCLI\footnote{More generally, $\MA$ may be any 1-SCLI so that (\ref{eq:consistency_eq}) holds.} and suppose that the inversion matrix $\bN(\cdot)$ of $\MA$ is a polynomial of degree at most $k-1$ with real-valued coefficients for some $k \in \BN$, and let $L, D > 0$. Then the following iteration complexity lower bounds hold:
    \begin{enumerate}
    \item For $F \in \MFbil_{n,L,D}$, set $\MLham_F(\bz) = \| F(\bz) \|^2$. Then $\IC_{n,L,D}(\MA, \MLham; T) \geq \frac{L^2 D^2}{20 T k^2}$.
      \item For $F \in \MFbil_{n,L,D}$, set $\MLgap_F(\bz) = \sup_{\by' : \| \by' - \by^*\| \leq D} f(\bx, \by') - \inf_{\bx' : \| \bx' - \bx^* \| \leq D} f(\bx', \by)$. Then $\IC_{n,L,D}(\MA, \MLgap; T) \geq \frac{LD^2}{k\sqrt{20T}}$. 
      \item For $F = F_f \in \MFbil_{n,L,D}$, set $\MLfunc_F(\bz) = | f(\bx, \by) - f(\bx^*, \by^*)|$. Then
        $$
        \max\left\{\IC_{n,L,D}(\MA, \MLfunc; T),\IC_{n,L,D}(\MA, \MLfunc; 2T)\right\} \geq \frac{LD^2}{36k \sqrt{T} }.
        $$
    \end{enumerate}
  \end{theorem}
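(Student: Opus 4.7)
The plan is to reduce all three iteration-complexity lower bounds to a single polynomial-theoretic estimate and then read each bound off of an explicit one-parameter family of bilinear instances. Starting from $\bz\^0 = \bbzero$, consistency (Lemma \ref{lem:consistency}) gives $\bz\^t - \bz^* = \bC_0(\bA)(\bz\^{t-1} - \bz^*)$ and hence $\bz\^t - \bz^* = -P(\bA)^t \bz^*$, where $P(x) := 1 + x N(x)$ is a real-coefficient polynomial of degree at most $k$ with $P(0) = 1$ ($N$ being the scalar polynomial whose matrix realization is $\bN(\cdot)$). Consequently $F(\bz\^t) = \bA(\bz\^t - \bz^*) = P(\bA)^t \bb$. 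I would then specialize to the family $\bM = \sigma I_{n/2}$ with $\sigma \in (0, L]$, so that $\bA = \sigma J$ with $J = \matx{\bbzero & I \\ -I & \bbzero}$ satisfying $J^2 = -I$. Identifying the action of $aI + bJ$ on $\BR^{n/2} \times \BR^{n/2}$ with multiplication by $a - bi$ on $\BC^{n/2}$ is norm-preserving and yields $\|P(\bA)^t \bv\|^2 = Q(\sigma^2)^t \|\bv\|^2$ for every $\bv$, where $Q(u) := |P(i\sqrt u)|^2$ is a real polynomial of degree at most $k$ with $Q(0) = 1$. The normalization $\|\bA^{-1}\bb\| = D$ becomes $\|\bb\| = \sigma D$.

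The core estimate is the following. For $\MA$ to be consistent on all of $\MFbil_{n,L,D}$ its iterates must converge for every $\sigma \in (0, L]$, which forces $|P(i\sigma)| \leq 1$, i.e., $Q(u) \leq 1$ on $[0, L^2]$. Markov's inequality for polynomials of degree $\leq k$, rescaled from $[-1,1]$ to $[0, L^2]$, gives $\sup_{u \in [0, L^2]} |Q'(u)| \leq 2 k^2 / L^2$, and combined with $Q(0) = 1$ this yields the pointwise lower envelope $Q(u) \geq 1 - 2k^2 u / L^2$ on $[0, L^2]$. Setting $u^\star := L^2 / (4 T k^2)$ gives $Q(u^\star) \geq 1 - 1/(2T)$, whence Bernoulli's inequality yields $Q(u^\star)^T \geq 1/2$. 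Applied with $\sigma^2 = u^\star$, this single estimate drives all three parts.

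For part 1, $\|F(\bz\^T)\|^2 = Q(\sigma^2)^T \sigma^2 D^2 \geq L^2 D^2 / (8 T k^2)$. For part 2, an explicit computation for the bilinear $f$ yields $\Gap^{\MX' \times \MY'}_f(\bz) = D (|F_1(\bz)| + |F_2(\bz)|) \geq D \|F(\bz)\|$, so the part-1 estimate gives $\Gap(\bz\^T) \geq L D^2 / (k \sqrt{8 T})$. For part 3, an analogous expansion yields $f(\bx, \by) - f(\bx^*, \by^*) = \sigma \langle \bx - \bx^*, \by - \by^*\rangle$ in the $\bM = \sigma I$ instance, and the complex identification rewrites $\langle \bx\^T - \bx^*, \by\^T - \by^*\rangle$ as $\tfrac{1}{2} D^2 Q(\sigma^2)^T \sin(2\phi - 2 T \theta)$, where $\theta := \arg P(i\sigma)$ and $\phi := \arg(x^* + i y^*)$ is chosen by the adversary via $\bb$; picking $\phi = T\theta + \pi/4$ sends the sine factor to $\pm 1$, giving $|f(\bz\^T) - f^*| \geq L D^2 /(8 k \sqrt T)$ already at time $T$ and implying the $\max\{\IC_{n,L,D}(\MA,\MLfunc;T),\IC_{n,L,D}(\MA,\MLfunc;2T)\}$ bound a fortiori.

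The main technical obstacle is the combination of Markov's inequality with the consistency constraint $\|Q\|_{[0, L^2]} \leq 1$: without that constraint Markov would still apply but would blow up with $\|Q\|_\infty$ and destroy the bound, so consistency of $\MA$ across the whole family $\MFbil_{n,L,D}$ is essential. Secondary items I would verify are that $\bM = \sigma I$ genuinely lives in $\MFbil_{n, L, D}$ with the correct Lipschitz constant and normalization (straightforward), that higher $n$ reduces to the $2 \times 2$ block by diagonal embedding, and in part 3 that the adversary's choice of $\bb$ yields full rotational freedom in $\phi$ on the $2 \times 2$ block (immediate).
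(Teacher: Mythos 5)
Your proof is essentially correct, but it takes a genuinely different route from the paper, and there are a couple of subtleties worth making explicit.

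The paper's proof does \emph{not} use the boundedness $|P(i\sigma)|\le 1$ that you derive from consistency. Instead it only uses the algebraic identity $\bC_0(\bA)=I+\bN(\bA)\bA$ (Lemma~\ref{lem:consistency}) to obtain $F(\bz\^t)=\bC_0(\bA)^t\bb$, and then replaces your Markov-inequality step with a Chebyshev-extremal estimate (Lemmas~\ref{lem:k2} and~\ref{lem:k2_condition}): among real polynomials $r$ of degree $\le k$ with $r(0)=1$, $\sup_{y\in[\mu,L]}|r(y)|$ cannot be much below $1$, with \emph{no a priori bound on $\|r\|_\infty$}. Your argument, by contrast, hinges on the implication ``consistency on $\MFbil_{n,L,D}$ $\Rightarrow$ $Q(u)=|P(i\sqrt u)|^2\le 1$ on $[0,L^2]$'' before Markov can be applied. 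This is where the two proofs genuinely diverge. Two consequences: (i) your proof does not cover the extension in footnote~4 of the theorem, where the paper explicitly allows $\MA$ to be any 1-SCLI satisfying only \eqref{eq:consistency_eq} (not full consistency); your bound would become vacuous for such $\MA$ if $|P(i\sigma)|>1$ somewhere on $(0,L]$. (ii) you should be aware that Definition~\ref{def:consistency} taken literally (convergence for \emph{all} full-rank $\bA$, with no norm bound) is unsatisfiable by any nonconstant polynomial $P$, since $|P(i\sigma)|\to\infty$; to make your implication sound you must read consistency as restricted to $\bA$ with $\|\bA\|_\sigma\le L$, which is a reasonable but unstated interpretation.

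On the positive side, your use of Markov's inequality is clean, your constants are actually better than the paper's ($L^2D^2/(8Tk^2)$ vs.\ $L^2D^2/(20Tk^2)$, etc.), and your treatment of item~3 is a genuine simplification: by treating the phase of $\bb$ as an adversary-controlled parameter (which membership in $\MFbil_{n,L,D}$ indeed permits, since only $\|\bA^{-1}\bb\|=D$ is enforced) you set $\sin(2\phi-2T\theta)=\pm1$ directly at time $T$, avoiding the paper's two-case split between times $T$ and $2T$ and the $\max$ in item~3. One small note on item~2: the paper's display \eqref{eq:gap-ham-D} asserts equality $\Gap_f^\MZ=D\|\bA\bz+\bb\|$, which in fact is only ``$\ge$'' (the exact value is $D(\|\bM^\t\bx+\bb_2\|+\|\bM\by+\bb_1\|)$); your proof correctly uses only the inequality, which is all that is needed.
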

  It will follow from Theorem \ref{thm:eg_ub} that the dependence on $L, D$, and $T$ of the lower bounds in Theorem \ref{thm:1scli_lb} is tight; in Proposition \ref{prop:lb-tight-k}, we show additionally that the inverse linear dependence on $k$ is also tight, at least for $T = 1$.
  
Next  we discuss the assumptions made on $\MA$ in Theorem \ref{thm:1scli_lb}. First we remark that consistency is a standard assumption made in the literature on SCLIs and is satisfied by virtually every SCLI used in practice (see, e.g., \cite{arjevani_lower_2015,azizian_tight_2019,ibrahim_linear_2019}). Moreover, if $\MA$ is not consistent, then a lower bound of $\Omega(1)$ holds on $\sup_{t \geq T} \{\IC_{n,L,D}(\MA, \ML, t)\}$ for $\ML \in \{ \MLham, \MLgap \}$ (though the constant may depend on $\MA$): to see this, let $\bA$ be some full-rank matrix and $\bb \in \BR^n$ be so that the iterates $\bz\^t$ of $\MA$ do not converge to $-\bA^{-1} \bb$. Since $\bA$ is full-rank, neither of $\Ham_f(\bz\^t), \Gap_f(\bz\^t)$ converge to 0. 

  The assumption in Theorem \ref{thm:1scli_lb} that $\bN(\bA)$ is a polynomial in $\bA$ of degree at most $k-1$ is essentially the same as the one made in \cite[Theorem 5]{azizian_tight_2019}, which also studied 1-SCLIs (though in the strongly convex case, deriving linear lower bounds). We remark that {\it some} assumption on $\bN(\bA)$ is necessary, as the choice $\bC_0(\bA) = \bbzero, \bN(\bA) = -\bA^{-1}$ leads to $\bz\^t = -\bA^{-1} \bb = \bz^*$ for all $t \geq 1$. The assumption of the polynomial dependence of $\bN(\bA)$ on $\bA$ may be motivated by the fact that, as noted in \cite{azizian_tight_2019}, it includes many known first order 1-SCLI methods, including:
  \begin{itemize}
  \item $k$-extrapolation methods, in which the single ``extra'' gradient step in EG is replaced by $k \geq 1$ steps (see \cite[Eqn. 13]{azizian_tight_2019});
  \item Cyclic Richardson iterations (\cite{opfer_richardsons_1984}), in which a single update from $\bz\^t$ to $\bz\^{t+1}$ consists of a sequence of $k$ gradient updates with different step-sizes $\eta_1, \ldots, \eta_k$ (so that the step sizes cycle between $\eta_1, \ldots, \eta_k$),
  \end{itemize}
  and combinations of the above with varying step-sizes. 
  In particular, Theorem \ref{thm:1scli_lb} applies to the EG algorithm with constant step size; thus, in light of the fact that the averaged iterates $\bar \bz_T$ of EG have primal-dual gap bounded by $O\left( \frac{D^2L}{T} \right)$ (\cite[Theorem 3]{mokhtari_convergence_2019}),  Theorem \ref{thm:1scli_lb} establishes a quadratic gap (in $T$) in the convergence rate between the averaged and last iterates of EG.\footnote{Note that the upper bounds of \cite{mokhtari_convergence_2019} for EG actually apply to the averages of $\bz_{t+1/2} = \bz_t - \eta F(\bz_t)$ as opposed to the averages of $\bz_t$. This does not cause a problem for the separation since our lower bound on $\Gap_f^\MZ(\bz_T)$ (with $\MZ = \MB(\bx^*, D) \times \MB(\by^*, D)$) can be easily extended to a lower bound on $\Gap_f^\MZ(\bz_{T+1/2})$ as long as $\eta < 1/L$ by noting that for $F_f$ $L$-smooth, $\| F_f(\bz_{T+1/2})\| = \| F_f(\bz_T - \eta F_f(\bz_T)) \| \geq (1 - \eta L) \| F_f(\bz_T)\|$, and for the functions $f$ used in the proof of Theorem \ref{thm:1scli_lb} (see (\ref{eq:quadratic_f})), we have $\Gap_f^\MZ(\bz) = D \| F_f(\bz)\|$ for all $\bz \in \BR^n$.} 
  Below we provide the proof of item 1 of Theorem \ref{thm:1scli_lb}; the proofs of items 2 and 3 are deferred to Appendix \ref{sec:lb_proof}.
  \begin{proof}(of item 1 of Theorem \ref{thm:1scli_lb})
  We claim that for all $t \geq 0$,
    \begin{equation}
  \label{eq:zt_explicit}
  \bz\^t  = 
  (\bC_0(\bA)^t - I) \cdot \bA^{-1} \bb.
\end{equation}
To see that (\ref{eq:zt_explicit}) holds, we argue by induction. The base case is trivial since $\bz\^0 = \bbzero$. For the inductive hypothesis, note that
  \begin{align*}
  \bz\^{t+1}  =& \bC_0(\bA) \cdot (\bC_0(\bA)^t - I) \cdot \bA^{-1} \bb + \bN(\bA) \bb \\
  =& \bC_0(\bA) \cdot (\bC_0(\bA)^t - I) \cdot \bA^{-1} \bb + (\bC_0(\bA) - I) \cdot \bA^{-1} \bb = (\bC_0(\bA)^{t+1} - I) \cdot \bA^{-1} \bb ,
\end{align*}
 where the second equality uses consistency of $\MA$ and Lemma \ref{lem:consistency}.
  \if 0
  \begin{align*}
  \bz\^{t+1} =& \bC_0(\bA) \cdot (\bC_0(\bA)^t - I) \cdot \bA^{-1} \bb + \bN(\bA) \bb \\
  =& \bC_0(\bA) \cdot (\bC_0(\bA)^t - I) \cdot \bA^{-1} \bb + (\bC_0(\bA) - I) \cdot \bA^{-1} \bb + (\bN(\bA) - (\bC_0(\bA) - I)^{-1}\bA^{-1})  \bb  \\
  =& (\bC_0(\bA)^{t+1} - I) \cdot \bA^{-1} \bb  + (\bN(\bA) - (\bC_0(\bA) - I)^{-1}\bA^{-1})  \bb.
\end{align*}
\fi 

From (\ref{eq:zt_explicit}) it follows that
\begin{align}  \label{eq:use_1_commute}
  \Ham_f(\bz\^t) = 
  \| \bA (\bC_0(\bA)^t - I) \bA^{-1} \bb + \bb \|^2 = \| \bA \bC_0(\bA)^t \bA^{-1} \bb \|^2 = \| \bC_0(\bA)^t \bb \|^2,
\end{align}
where (\ref{eq:use_1_commute}) follows from the fact that $\bC_0(\bA)$ is a polynomial in $\bA$ with scalar coefficients, and therefore $\bA$ and $\bC_0(\bA)$ commute.

Next we describe the choice of $\bA, \bb$: given a dimension $n \in \BN$, Lipschitz constant $L > 0$ and a diameter parameter $D > 0$, for some $\nu \in (0,L)$ (to be specified later), we set
\begin{align}
  \label{eq:choose_A}
  \bM = \nu I \in \BR^{n/2 \times n/2},& \quad \bb_1 = \bb_2 = \matx{\nu D/\sqrt n \\ \vdots \\ \nu D/\sqrt n}, \quad \bA =& \matx{\bbzero & \bM \\ -\bM^\t & \bbzero}, \bb = \matx{\bb_1 \\ -\bb_2}.
\end{align}
From our choice of $\bA$ and the fact that $\| \bA^{-1} \bb \| = \nu^{-1} \| \bb \|$ for all $\bb \in \BR^n$, it follows from (\ref{eq:use_1_commute}) and $\bz\^0 = \bbzero$ that
\begin{equation}
  \label{eq:ham_ratio}
\frac{\Ham_f(\bz\^t)}{\| \bz\^0 - \bz^* \|^2} = \frac{\| \bC_0(\bA)^t \bb \|^2}{\|\bA^{-1} \bb \|^2} = \frac{\nu^2 \| \bC_0(\bA)^t \bb \|^2}{\| \bb \|^2}.
\end{equation}
Recall the assumption that $\bN(\bA)$ is a polynomial in $\bA$ of degree $k-1$ with scalar coefficients. Moreover, by consistency, we have $\bC_0(\bA) = I + \bN(\bA) \bA$, so $\bC_0(\bA)$ is a polynomial in $\bA$ of degree $k$ with scalar coefficients. Thus we may write $\bC_0(\bA) = q_{0,0}I + q_{0,1} \cdot \bA + \cdots + q_{0,k} \cdot \bA^k$, where $q_{0,0}, \ldots, q_{0,k} \in \BR$ and $q_{0,0} = 1$. Write
$$
q_0(y) := q_{0,0}  + q_{0,1} y+ \cdots + q_{0,k} y^k
$$
for $y \in \BC$. 
It is easily verified that $\bA$ has $n/2$ eigenvalues equal to $\nu i$ and $n/2$ eigenvalues equal to $-\nu i$. Therefore, by the spectral mapping theorem (see, e.g., \cite[Theorem 4]{lax07}),  $\bC_0(\bA)$ has $n/2$ eigenvalues equal to each of $q_0(\nu i)$ and $q_0(-\nu i) = \overline{q_0(\nu i)}$. Notice that our choice of $\bA$ in (\ref{eq:choose_A}) is normal;\footnote{A matrix $\bA$ is {\it normal} if and only if there exists a unitary matrix $\bU$ so that $\bU \bA \bU^*$ is diagonal. It is known that if $\bA$ is normal, then the magnitudes of its eigenvalues are equal to its singular values.} hence $\bC_0(\bA)$ is normal as well, meaning the magnitudes of its eigenvalues are equal to its singular values. In particular, all singular values of $\bC_0(\bA)$ are equal to $| q_0(\nu i)|$. Thus, for any vector $\bb \in \BR^n$, $\| \bC_0(\bA) \cdot \bb \| = |q_0(\nu i)| \cdot \| \bb \|$. It follows that
\begin{align}
  \sup_{\nu \in (0,L]}\frac{\nu^2 \| \bC_0(\bA)^t \bb \|^2}{\| \bb \|^2} =& \sup_{\nu \in (0,L]}\nu^2 |q_0(\nu i) |^{2t}\nonumber\\
  \label{eq:del_imaginary}
  \geq& \sup_{\nu \in (0,L]} \nu^2 \left| \sum_{0 \leq k' \leq \lfloor k/2 \rfloor} (-1)^{k'} q_{0,2k'} \cdot \nu^{2k'} \right|^{2t}\\
  = & \sup_{y \in (0,L^2]} y \cdot \left| \sum_{0 \leq k' \leq \lfloor k/2 \rfloor} (-1)^{k'} q_{0,2k'} \cdot y^{k'} \right|^{2t} \nonumber\\
  \label{eq:use_k2}
  > & \frac{L^2}{20tk^2},
\end{align}
where (\ref{eq:use_k2}) follows from Lemma \ref{lem:k2} (see Section \ref{sec:lb_supp_lemma}). 
 The desired bound in item 1 of the theorem statement follows from (\ref{eq:ham_ratio}) with $t=T$ and the fact that $\| \bA^{-1} \bb \| = D$.
  \end{proof}

\section{Upper bound for extragradient}\label{sec:ub}

In this section, we discuss upper bounds for the last iterate of the Extragradient (EG) algorithm. The updates of EG algorithm can be written as:
\begin{align*}
\bbx^{(t + 1)} = \bbx^{(t)} - \eta \nabla_{\bbx} f(\bbx^{(t+1/2)}, \bby^{(t+1/2)}), \qquad
\bby^{(t + 1)} = \bby^{(t)} + \eta \nabla_{\bby} f(\bbx^{(t+1/2)}, \bby^{(t+1/2)}) 
\end{align*}
where
\begin{align*}
\bbx^{(t + 1/2)} = \bbx^{(t)} - \eta \nabla_{\bbx} f(\bbx^{(t)}, \bby^{(t)}), \qquad
\bby^{(t + 1/2)} = \bby^{(t)} + \eta \nabla_{\bby} f(\bbx^{(t)}, \bby^{(t)}) 
\end{align*}
This algorithm can be succinctly written in terms of the operator $F = F_f$, and the concatenated vector $\bbz = (\bbx, \bby)$ as:
\begin{align*}
\bbz^{(t + 1/2)} = \bbz^{(t)} - \eta F(\bbz^{(t)}) \qquad \qquad \bbz^{(t + 1)} &= \bbz^{(t)} - \eta F(\bbz^{(t+1/2)})
\end{align*}

Let $\partial F \in \BR^{n \times n}$ denote the matrix of partial derivatives of $F$; in particular, $(\partial F)_{i,j} = \frac{\partial F_i(\bz)}{\partial \bz_j}$. 
Our upper bound on convergence rates makes use of the following two assumptions, namely of the Lipschitzness of $F$ and $\partial F$:
\begin{assumption}
  \label{asm:lip_grad}
  For some $L > 0$, the operator $F$ is {\bf $L$-Lipschitz}, i.e., for all $\bz, \bz' \in \MZ$, we have that $\| F(\bz) - F(\bz')\| \leq L \| \bz - \bz'\|$. 
\end{assumption}
In the case that $F = F_f$, the assumption that $F$ is $L$-Lipschitz is simply a smoothness assumption on $f$.

\begin{assumption}
  \label{asm:lip_hes}
For some $\Lambda > 0$, the operator $F$ has a {\bf $\Lambda$-Lipschitz derivative}, i.e., for all $\bz, \bz' \in \MZ$, we have that
  $
\| \partial F(\bz) - \partial F(\bz') \|_\sigma \leq \Lambda \| \bz - \bz'\|.
  $
\end{assumption}
Assumption \ref{asm:lip_hes} is standard in the literature on {\it second-order} optimization, both in the minimax setting (see, e.g., \cite[Definition 2.5]{abernethy_last-iterate_2019}) and in the setting of minimization (see, e.g., \cite{nesterov_cubic_2006}). Even for  first-order algorithms, we believe that Assumption \ref{asm:lip_hes} is necessary to obtain a $O(1/\sqrt{T})$ last-iterate convergence rate for convex-concave saddle point optimization, and leave a proof (or disproof) of this fact as an open problem.

In this section our goal is to prove the following theorem.
\begin{theorem}
  \label{thm:eg_ub}
  Suppose $F : \BR^n \ra \BR^n$ is a monotone operator that is $L$-Lipschitz (Assumption \ref{asm:lip_grad})  and has $\Lambda$-Lipschitz derivative (Assumption \ref{asm:lip_hes}). Fix some $\bz\^0 \in \BR^n$, and suppose there is $\bz^* \in \BR^n$ so that $F(\bz^*) = 0$ and $\| \bz^* - \bz\^0\| \leq D$. 
  If the extragradient algorithm with step size $\eta \leq \min \left\{ \frac{5}{ \Lambda D}, \frac{1}{30L} \right\}$ is initialized at $\bz\^0$, then its iterates $\bz\^T$ satisfy
  \begin{equation}
    \label{eq:ham_T_ub}
\| F(\bz\^T) \| \leq \frac{2D}{\eta \sqrt{T}}.
\end{equation}
If moreover $\MZ = \MB(\bx^*, D) \times \MB(\by^*, D)$ and $F(\cdot) = F_f(\cdot) = \matx{ \nabla_\bx f(\cdot) \\ - \nabla_\by f(\cdot) }$ for a convex-concave function $f$, then
\begin{equation}
  \label{eq:gap_T_ub}
\Gap_f^\MZ(\bx\^T, \by\^T) = \max_{\by' \in \MB(\by^*, D)} f(\bx\^T, \by') - \min_{\bx' \in \MB(\bx^*, D)} f(\bx', \by\^T) \leq \frac{2\sqrt{2}D^2}{\eta \sqrt{T}}
\end{equation}
for all $T \in \BN$.
\end{theorem}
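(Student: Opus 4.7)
The plan is to bound $\|F(\bz\^T)\|$ by combining two ingredients: (i) a standard summability bound $\sum_{t=0}^{T-1}\|F(\bz\^t)\|^2 \leq O(D^2/\eta^2)$ coming from the textbook EG descent inequality, which also yields $\|\bz\^t - \bz^*\| \leq D$ for every $t$; and (ii) an (approximate) monotone non-increase of the sequence $\{\|F(\bz\^t)\|^2\}$, which is where Assumption~\ref{asm:lip_hes} enters. Together these give $T \cdot \|F(\bz\^T)\|^2 \leq \sum_{t=0}^{T-1}\|F(\bz\^t)\|^2 \leq O(D^2/\eta^2)$, establishing \eqref{eq:ham_T_ub}; the gap bound \eqref{eq:gap_T_ub} is then a short consequence of convex-concavity together with (i).

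For (i), I would follow the usual argument: starting from $\bz\^{t+1} - \bz^* = (\bz\^t - \bz^*) - \eta F(\bz\^{t+1/2})$ and expanding the squared norm, monotonicity of $F$ applied to the pair $(\bz\^{t+1/2}, \bz^*)$ handles the cross term (since $F(\bz^*) = 0$ implies $\langle F(\bz\^{t+1/2}), \bz\^{t+1/2} - \bz^*\rangle \geq 0$), while $L$-Lipschitzness of $F$ controls the residual, producing the descent inequality
\begin{equation*}
\|\bz\^{t+1} - \bz^*\|^2 \leq \|\bz\^t - \bz^*\|^2 - (1 - \eta^2 L^2)\|\bz\^{t+1/2} - \bz\^t\|^2.
\end{equation*}
Substituting $\bz\^{t+1/2} - \bz\^t = -\eta F(\bz\^t)$, telescoping from $0$ to $T-1$, and invoking $\eta L \leq 1/30$ delivers $\sum_{t=0}^{T-1}\|F(\bz\^t)\|^2 \leq O(D^2/\eta^2)$ and $\|\bz\^t - \bz^*\| \leq D$ for every $t \geq 0$.

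The main obstacle is (ii). The proximal-point algorithm (PP) is the clean analogue: for PP, monotonicity of $F$ combined with its implicit update gives $\|F(\bz\^{t+1})\| \leq \|F(\bz\^t)\|$ immediately, and EG is essentially an explicit second-order approximation to PP. The approximation error is controlled precisely by Assumption~\ref{asm:lip_hes}: Taylor-expanding $F(\bz\^{t+1})$ and $F(\bz\^{t+1/2})$ around $\bz\^t$ with second-order remainders of size $O(\Lambda \|\bz\^{t+1} - \bz\^t\|^2)$, and using monotonicity (so that the symmetric part of $\partial F(\bz\^t)$ is PSD), the quantity $\|F(\bz\^{t+1})\|^2 - \|F(\bz\^t)\|^2$ decomposes as a non-positive ``PP-like'' piece plus correction terms of order $\Lambda \eta^2 \|F(\bz\^t)\|^3$ and $(\eta L)^2 \|F(\bz\^t)\|^2$. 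Using the \emph{a priori} bound $\|F(\bz\^t)\| \leq L \|\bz\^t - \bz^*\| \leq LD$ from (i) together with the step-size restriction $\eta \leq \min\{5/(\Lambda D),\, 1/(30L)\}$, one computes that the correction satisfies $\Lambda \eta^2 \|F(\bz\^t)\| \leq \Lambda \eta D \cdot \eta L \leq \tfrac{1}{6}$ times $\|F(\bz\^t)\|^2$, which is small enough to exhibit a potential function of the form $\|F(\bz\^t)\|^2 + (\textrm{small tail})$ that is genuinely non-increasing. Identifying the right potential and carrying out the bookkeeping on which first-order terms cancel against the PSD part of $\partial F$ and which second-order terms must be absorbed is the delicate part of the argument, and is where I expect most of the work to lie.

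Finally, for \eqref{eq:gap_T_ub}, I would use convexity in $\bx$ and concavity in $\by$ to write, for arbitrary $\by' \in \MB(\by^*, D)$ and $\bx' \in \MB(\bx^*, D)$,
\begin{equation*}
f(\bx\^T, \by') - f(\bx', \by\^T) \leq \langle \nabla_\by f(\bz\^T),\, \by' - \by\^T \rangle + \langle \nabla_\bx f(\bz\^T),\, \bx\^T - \bx' \rangle.
\end{equation*}
The triangle inequality combined with $\|\bz\^T - \bz^*\| \leq D$ from (i) bounds each of $\|\by' - \by\^T\|$ and $\|\bx\^T - \bx'\|$ by $2D$; Cauchy--Schwarz together with $a + b \leq \sqrt{2}\sqrt{a^2 + b^2}$ then gives $\Gap_f^{\MZ}(\bz\^T) \leq O(D \cdot \|F(\bz\^T)\|)$, and combining with \eqref{eq:ham_T_ub} yields \eqref{eq:gap_T_ub} (up to matching the stated constant).
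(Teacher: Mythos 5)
Your overall architecture matches the paper's (summability bound to locate a good iterate, then bound the subsequent growth of $\|F(\bz\^t)\|$, then the gap step via convex-concavity), and step (i) together with the gap calculation are both fine. The gap is in the size of the per-step corrections you identify in (ii): both $\Lambda\eta^2\|F\|^3$ and $(\eta L)^2\|F\|^2$ are too large for any version of the argument to close over $T$ steps. The $(\eta L)^2\|F\|^2$ term is a fixed fraction (roughly $1/900$) of $\|F\|^2$ per step, so it compounds to $(1 + (\eta L)^2)^T \ra \infty$; the $\Lambda\eta^2\|F\|^3$ term is $O(1/\sqrt T)\cdot\|F\|^2$ even under the sharper inductive estimate $\|F(\bz\^t)\| = O(D/(\eta\sqrt T))$ (your a priori bound $\|F\| \le LD$ is much weaker and gives $O(1)\cdot\|F\|^2$), and $(1 + c/\sqrt T)^T$ still blows up like $e^{\Theta(\sqrt T)}$. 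Your computation $\Lambda\eta^2\|F\| \leq 1/6$ is therefore not the right target: you need the per-step relative correction to be $O(1/T)$, not $O(1)$.

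The paper gets to $O(1/T)$ per step through two ideas not present in your sketch. First, Lemma~\ref{lem:ab_exist} uses the exact mean-value form $F(\bz\^{t+1}) = (I - \eta\bA_{\bz\^t} + \eta^2\bA_{\bz\^t}\bB_{\bz\^t})F(\bz\^t)$ with \emph{no additive remainder}. A Taylor expansion with an explicit remainder $E_t$ of size $O(\Lambda\eta^2\|F\|^2)$ produces your $\Lambda\eta^2\|F\|^3$ cross term from $\langle (\text{operator})F, E_t\rangle$; the integral form instead places all second-order effects into the \emph{difference} $\|\bA_{\bz\^t} - \bB_{\bz\^t}\|_\sigma = O(\Lambda\eta^2 L\|F(\bz\^t)\|)$, which will enter the final bound \emph{squared}. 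Second, Lemma~\ref{lem:ab_diff} shows that for $\bA,\bB$ with PSD symmetric parts and $\|\bA\|_\sigma, \|\bB\|_\sigma \le 1/30$, one has $\|I - \bA + \bA\bB\|_\sigma^2 \leq 1 + 26\|\bA - \bB\|_\sigma^2$: the deviation from $1$ is governed \emph{only} by $\|\bA - \bB\|_\sigma$, not by $\|\bA\|_\sigma$ or $\|\bB\|_\sigma$ individually, which is what eliminates the $(\eta L)^2$ term. (In the bilinear case $\bA = \bB$ skew, $I - \bA + \bA^2$ is already a strict contraction even though the ``PP-like'' piece $\bA + \bA^\top$ vanishes, so the ``non-positive PP piece plus corrections'' framing misidentifies where the contraction comes from.) Combining the two lemmas yields per-step growth $1 + O(\eta^6\Lambda^2 L^2\|F(\bz\^t)\|^2) \le 1 + O(\Lambda^2\eta^2 D^2/T)$ under the inductive hypothesis, which compounds to a constant over $T$ steps. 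Finally, note the paper does not prove monotone non-increase or exhibit a non-increasing potential at all; it only shows $\|F(\bz\^t)\|$ grows by at most a factor $\sqrt 2$ from the best iterate $t^*$ to $T$, which is all that is needed and is considerably easier to establish than a genuine Lyapunov decrease.
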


\subsection{Proximal point algorithm}
Before proving Theorem \ref{thm:eg_ub}, we briefly discuss similar convergence bounds for an ``idealized'' version of EG, namely the proximal point (PP) algorithm (see \cite{monteiro_complexity_2010, mokhtari_convergence_2019}). The updates of the PP algorithm are given by
$
\bbz^{(t + 1)} = \bbz^{(t)} - \eta F(\bbz^{(t+1)}).
$
As shown in \cite{mokhtari_convergence_2019}, the ergodic iterates of PP and EG have the same rate of convergence (for a constant step size $\eta$); moreover, \cite{mokhtari_unified_2019} showed that the EG algorithm can be viewed as an approximation of the PP algorithm for bilinear functions. It is natural to wonder whether the same rate of $O(1/\sqrt{T})$ of Theorem \ref{thm:eg_ub} applies to the PP algorithm as well. 
This is indeed the case, even without the assumption of $F$ having $\Lambda$-Lipschitz derivatives and $F$ being $L$-Lipschitz. The proof of this (Theorem \ref{thm:pp_ub}) is provided in Appendix \ref{sec:pp_ub}, and it relies on $\| F(\bbz^{(t)} \|$ decreasing monotonically.
\subsection{Proof of Theorem \ref{thm:eg_ub}}
The proof of Theorem \ref{thm:eg_ub} proceeds by first using the well-known fact (\cite{facchinei_finite-dimensional_2003,mertikopoulos_optimistic_2018,mokhtari_convergence_2019}) that for any $T \in \BN$, there is some $t^* \in \{ 1, 2, \ldots, T\}$ so that the $t^*$th iterate $\bz\^{t^*} = (\bx\^{t^*}, \by\^{t^*})$ obtains the upper bound in (\ref{eq:ham_T_ub}), namely that $\| F(\bz\^{t^*})\| \leq \frac{2D}{\eta \sqrt{T}}$ \footnote{This is immediate for the Proximal Point algorithm}; this step relies only on $L$-Lipschitzness of $F$ (Assumption \ref{asm:lip_grad}). The bulk of the proof is then to use Assumption \ref{asm:lip_hes} to show that $\| F(\bz\^t) \|$ does not increase much above $\| F(\bz\^{t^*} \|$ for all $t^* < t \leq T$, from which (\ref{eq:ham_T_ub}) follows. Finally (\ref{eq:gap_T_ub}) is an immediate consequence of (\ref{eq:ham_T_ub}) and the fact that $F$ is convex-concave.

\vspace{0.3cm}
\begin{proof}(of Theorem \ref{thm:eg_ub}).
  Recall that the iterates of the extragradient algorithm are given by 
  $$
\bz\^{t+1/2} = \bz\^t - \eta F(\bz\^t), \quad \quad \bz\^{t+1} = \bz\^t - \eta F(\bz\^{t+1/2}).
  $$

\noindent By Lemma 5(b) in \cite{mokhtari_convergence_2019}, we have that for any $T > 0$,
$$
\sum_{t=0}^{T-1} \eta^2 \| F(\bz\^t) \|^2 = \sum_{t=0}^{T-1} \| \bz\^t - \bz\^{t+1/2} \|^2 \leq \frac{\| \bz_0 - \bz^* \|^2}{1 - \eta^2 L^2} \leq \frac{D^2}{1 - \eta^2 L^2}.
$$
Thus there is some $t^* \in \{0, 1, 2, \ldots, T-1\}$ so that
\begin{equation}
  \label{eq:eg_min_best}
\| F(\bz\^{t^*}) \|^2 \leq \frac{D^2}{T \eta^2 (1-\eta^2 L^2)}.
\end{equation}
Next we show that for each $t \in \{ 1, 2, \ldots, T-1 \}$, $\| F(\bz\^{t+1} \|^2$ is not much greater than $\| F(\bz\^t \|^2$. To do so we need two lemmas; the first, Lemma \ref{lem:ab_exist}, uses Assumption \ref{asm:lip_hes} to write each $F(\bz\^{t+1})$ in terms of $F(\bz\^t)$. 
\begin{lemma}
  \label{lem:ab_exist}
  For all $\bz \in \MZ$, there are some matrices $\bA_\bz, \bB_\bz$ so that $\bA_\bz + \bA_\bz^\t$ and $\bB_\bz + \bB_\bz^\t$ are PSD and
  \begin{equation}
    \label{eq:fztab}
F(\bz - \eta F(\bz - \eta F(\bz))) = F(\bz) - \eta \bA_\bz F(\bz) + \eta^2 \bA_\bz \bB_\bz F(\bz).
\end{equation}
and
\begin{equation}
  \label{eq:ab_conditions}
 \| \bA_\bz - \bB_\bz\|_\sigma \leq \frac{\eta \Lambda}{2} \| F(\bz) - F(\bz - \eta F(\bz)) \|, \quad \| \bA_\bz \|_\sigma \leq L, \quad \| \bB_\bz \|_\sigma \leq L.
\end{equation}
\end{lemma}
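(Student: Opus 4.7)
The plan is to construct $\bA_\bz$ and $\bB_\bz$ as averages of the Jacobian $\partial F$ along two carefully chosen line segments, using the fundamental theorem of calculus applied to $F$.

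Set $\bz_{1/2} := \bz - \eta F(\bz)$ and $\bz_1 := \bz - \eta F(\bz_{1/2})$, so the goal is to express $F(\bz_1)$. First I would apply the fundamental theorem of calculus to rewrite
\[
F(\bz_{1/2}) - F(\bz) = \bB_\bz (\bz_{1/2} - \bz), \qquad \bB_\bz := \int_0^1 \partial F(\bz + s(\bz_{1/2} - \bz))\,ds,
\]
which, using $\bz_{1/2} - \bz = -\eta F(\bz)$, gives $F(\bz_{1/2}) = (I - \eta \bB_\bz) F(\bz)$. Next I would do the same along the segment from $\bz$ to $\bz_1$:
\[
F(\bz_1) - F(\bz) = \bA_\bz(\bz_1 - \bz), \qquad \bA_\bz := \int_0^1 \partial F(\bz + s(\bz_1 - \bz))\,ds,
\]
and combine with $\bz_1 - \bz = -\eta F(\bz_{1/2}) = -\eta(I - \eta \bB_\bz) F(\bz)$ to obtain the desired identity \eqref{eq:fztab}.

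For the structural bounds, I would verify: (i) monotonicity of $F$ implies $\partial F(\bz) + \partial F(\bz)^\t \succeq 0$ pointwise, so the symmetric parts of the averages $\bA_\bz$ and $\bB_\bz$ are PSD; (ii) Assumption \ref{asm:lip_grad} ($L$-Lipschitzness of $F$) gives $\|\partial F(\bz)\|_\sigma \leq L$ pointwise, hence $\|\bA_\bz\|_\sigma, \|\bB_\bz\|_\sigma \leq L$ by the triangle inequality under the integral. For the difference bound, I would write
\[
\bA_\bz - \bB_\bz = \int_0^1 \bigl[\partial F(\bz + s(\bz_1 - \bz)) - \partial F(\bz + s(\bz_{1/2} - \bz))\bigr]\,ds,
\]
apply Assumption \ref{asm:lip_hes} to bound the integrand by $\Lambda s \|\bz_1 - \bz_{1/2}\|$, note that $\bz_1 - \bz_{1/2} = \eta (F(\bz) - F(\bz_{1/2}))$, and integrate $s$ from $0$ to $1$ to pick up the factor of $1/2$, yielding the stated bound.

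The main obstacle is largely bookkeeping — namely making sure that the two segments (ending at $\bz_{1/2}$ versus $\bz_1$) are lined up so that their difference $\bA_\bz - \bB_\bz$ is directly controlled by $\|F(\bz) - F(\bz_{1/2})\|$ rather than some larger quantity. Once one notices that parametrizing both integrals with the same $s \in [0,1]$ starting from $\bz$ makes the integrand difference at time $s$ equal to $\partial F$ evaluated at two points whose separation is exactly $s\|\bz_1 - \bz_{1/2}\|$, the estimate falls out cleanly. Monotonicity preservation under convex combination and the pointwise operator-norm bound from $L$-Lipschitzness are standard.
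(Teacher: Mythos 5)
Your proof is correct and is essentially identical to the paper's: both define $\bA_\bz$ and $\bB_\bz$ as averaged Jacobians along the segments from $\bz$ to $\bz_1$ and from $\bz$ to $\bz_{1/2}$ via the fundamental theorem of calculus (your parametrization $s$ is just $1-\alpha$ in the paper's notation), and both derive the three bounds from monotonicity, $L$-Lipschitzness of $F$, and $\Lambda$-Lipschitzness of $\partial F$ in the same way, including picking up the factor $1/2$ from $\int_0^1 s\,ds$.
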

The proof of Lemma \ref{lem:ab_exist} is provided in Section \ref{sec:ab_exist_proof}. 
Next, Lemma \ref{lem:ab_diff} will be used to upper bound the norm of the right-hand side of (\ref{eq:fztab}).
\begin{lemma}
  \label{lem:ab_diff}
  Suppose $\bA, \bB \in \BR^{n \times n}$ are matrices so that $\bA + \bA^\t$ and $\bB + \bB^\t$ are PSD and $\| \bA \|_\sigma, \| \bB\|_\sigma \leq 1/30$. Then
  $
\| I - \bA + \bA \bB \|_\sigma \leq \sqrt{1 + 26 \| \bA - \bB \|_\sigma^2}.
  $
\end{lemma}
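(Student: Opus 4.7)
My plan is to bound $\|\bM\|_\sigma^2 = \lambda_{\max}(\bM\bM^\t)$ by direct expansion. Starting from
\begin{align*}
\bM\bM^\t - I = -(\bA + \bA^\t) + \bA\bA^\t + (\bA\bB + \bB^\t\bA^\t) - \bA(\bB + \bB^\t)\bA^\t + \bA\bB\bB^\t\bA^\t,
\end{align*}
I would substitute $\bB = \bA - \bD$ with $\bD := \bA - \bB$, and separate the right-hand side by its degree in $\bD$. Writing $\bS := \bA + \bA^\t$, the degree-zero piece simplifies, via the identity $\bA^2 + (\bA^\t)^2 = \bS^2 - (\bA\bA^\t + \bA^\t\bA)$, to $\bS(\bS - I) - \bA^\t\bA - \bA\bS\bA^\t + \bA^2(\bA^\t)^2$. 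Since $\|\bS\|_\sigma \leq 2\|\bA\|_\sigma \leq 2/30 < 1$, the matrices $\bS(\bS - I)$, $-\bA^\t\bA$, and $-\bA\bS\bA^\t$ are each $\preceq 0$, leaving only $\bA^2(\bA^\t)^2 \preceq \|\bA\|_\sigma^4 I$, which is negligible.

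The degree-two (in $\bD$) piece is simply $\bA\bD\bD^\t\bA^\t \preceq \|\bA\|_\sigma^2 \|\bD\|_\sigma^2 I$, an immediate contribution to the $26\|\bD\|_\sigma^2$ target. The main obstacle is the degree-one piece, which after substitution takes the form
\begin{align*}
-(\bA\bD + \bD^\t\bA^\t) + \bA(\bD + \bD^\t)\bA^\t - \bA(\bA\bD^\t + \bD\bA^\t)\bA^\t.
\end{align*}
Naively this has spectral norm $O(\|\bA\|_\sigma\|\bD\|_\sigma)$, which is \emph{linear} in $\|\bD\|_\sigma$ and therefore not directly absorbed into a $\|\bD\|_\sigma^2$ bound; handling this cross term is the central difficulty.

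To reduce the linear piece to quadratic order, I would apply Peter--Paul splittings such as $(\bA + c^{-1/2}\bD^\t)(\bA + c^{-1/2}\bD^\t)^\t \succeq 0$, which yields $-(\bA\bD + \bD^\t\bA^\t) \preceq c\bA\bA^\t + c^{-1}\bD^\t\bD$ for any $c > 0$, and analogous splittings for the two sub-terms carrying additional factors of $\bA$. The $\bD^\t\bD$ contributions pass directly to the $\|\bD\|_\sigma^2$ budget, while the $\bA\bA^\t$-type contributions must be matched against the non-positive degree-zero terms $-\bA^\t\bA$ and $-\bA\bS\bA^\t$ (possibly after symmetrizing via $\|\bA\|^2 = \|\bA\bA^\t\|_\sigma = \|\bA^\t\bA\|_\sigma$) or controlled directly by the bound $\|\bA\|_\sigma \leq 1/30$.

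The hardest part is the accounting: each Peter--Paul parameter $c$ must be chosen so that (i) the $\bA\bA^\t$-type residue is genuinely dominated by the non-positive slack available from the PSD assumptions, and (ii) the cumulative $c^{-1}$ factor across all linear sub-terms sums to at most $26$. The explicit constant $26$ follows from plugging $\|\bA\|_\sigma, \|\bB\|_\sigma \leq 1/30$ into the resulting expression and simplifying. Combining the degree-zero bound $\|\bA\|_\sigma^4 \leq (1/30)^4$, the Peter--Paul-absorbed linear bound, and the quadratic bound $\|\bA\|_\sigma^2\|\bD\|_\sigma^2$, yields $\bM\bM^\t \preceq (1 + 26\|\bD\|_\sigma^2) I$, which is the lemma.
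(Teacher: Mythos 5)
Your decomposition of $\bM\bM^\t - I$ by degree in $\bD = \bA - \bB$ is algebraically correct, and the general strategy (Peter--Paul absorption of the linear piece) is the right instinct, but the plan has a genuine gap in the accounting that I don't see how to close as written. First, the claim that the degree-zero residual is ``$\preceq \|\bA\|_\sigma^4 I$, which is negligible'' is not good enough: when $\bD = \bbzero$ the lemma demands $\bM\bM^\t \preceq I$ exactly, with \emph{no} positive slack, so the degree-zero piece must be shown $\preceq \bbzero$ outright, which means the negative terms $-\bS(I-\bS)$, $-\bA^\t\bA$, $-\bA\bS\bA^\t$ cannot simply be dropped. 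But you also want to reuse precisely those negative terms later to absorb the $\bA\bA^\t$-type Peter--Paul residues, so they would be spent twice. Second, and more fundamentally, the Peter--Paul splitting $-(\bA\bD + \bD^\t\bA^\t) \preceq c\,\bA\bA^\t + c^{-1}\bD^\t\bD$ produces an $\bA\bA^\t$ residue, whereas the negative slack available at degree zero is $-\bA^\t\bA$ (and $-\bA\bS\bA^\t$). For non-normal $\bA$ these are \emph{not} comparable in the Loewner order; your remark about ``symmetrizing via $\|\bA\bA^\t\|_\sigma = \|\bA^\t\bA\|_\sigma$'' does not help, since equality of spectral norms gives no PSD comparison. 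Choosing $c$ proportional to $\|\bD\|_\sigma$ to avoid the mismatch does not work either, because then the $c^{-1}\bD^\t\bD$ term is only $O(\|\bD\|_\sigma)$ rather than $O(\|\bD\|_\sigma^2)$.

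The paper avoids both problems by decomposing $\bA = \bS + \bM$, $\bB = \bR + \bN$ into symmetric-PSD and skew parts (so $\|\bS - \bR\|_\sigma, \|\bM - \bN\|_\sigma \leq \|\bA - \bB\|_\sigma$). Under that decomposition, the term $\bM\bM^\t$ coming from $\bA\bA^\t$, the term $\bN\bN^\t$ coming from a Peter--Paul step, and the cross terms $-\bM\bN^\t - \bN\bM^\t$ coming from $\bA\bB + \bB^\t\bA^\t$ assemble into the perfect square $(\bN-\bM)(\bN-\bM)^\t \preceq \|\bA - \bB\|_\sigma^2 I$. This exact cancellation is what converts the apparently linear-in-$\bD$ obstruction into a quadratic one, and it is invisible in the degree-in-$\bD$ decomposition. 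If you want to pursue your route you would need to uncover the analogous identity — e.g., by further splitting $\bD$ into its symmetric and skew parts inside the degree-one piece — at which point you are essentially reconstructing the paper's argument.
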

The proof of Lemma \ref{lem:ab_diff} is deferred to Section \ref{sec:ab_diff_proof}.

By Lemma \ref{lem:ab_exist} and Lemma \ref{lem:ab_diff} with $\bA = \eta \bA_{\bz\^t}, \bB = \eta\bB_{\bz\^t}$, we have that, as long as $\eta < 1/(30 L)$, 
\begin{align*}
  \| F(\bz\^{t+1}) \|^2 & \leq  \| I - \eta \bA_{\bz\^t} + \eta^2 \bA_{\bz\^t} \bB_{\bz\^t} \|_\sigma^2 \cdot \| F(\bz\^t) \|^2 \\
                        & \leq (1 + 26 \eta^2 \| \bA_{\bz\^t} - \bB_{\bz\^t} \|^2) \cdot \| F(\bz\^t) \|^2 \\
                        & \leq (1 + 7\eta^4 \Lambda^2 \cdot \| F(\bz\^t) - F(\bz\^t - \eta F(\bz\^t)) \|^2) \cdot \| F(\bz\^t) \|^2 \\
  \text{($F$ is $L$-Lipschitz)}                 & \leq (1 + 7 \eta^4 \Lambda^2 \cdot \eta^2 L^2 \| F(\bz\^t) \|^2) \cdot \| F(\bz\^ t ) \|^2 \\
                        & \leq (1 + (\eta^4 \Lambda^2 / 100) \cdot \| F(\bz\^t)\|^2) \cdot\| F(\bz\^t) \|^2.
\end{align*}
Next we will prove by induction that for all $t \in \{ t^*, t^* + 1, \ldots, T \}$, we have that $ \| F(\bz\^t) \|^2 \leq \frac{2D^2}{\eta^2 T}$. The base case is immediate by (\ref{eq:eg_min_best}). To see the inductive step, note that if for all $t' \in \{ t^*, \ldots, t\}$, $\| F(\bz\^{t'}) \|^2 \leq \frac{2D^2}{\eta^2 T}$, then
\begin{align*}
  \| F(\bz\^{t+1})\|^2 & \leq \| F(\bz\^t) \|^2 \cdot \left( 1 + \frac{ \Lambda^2 \eta^2 D^2}{50T} \right) \\
  & \leq \| F(\bz\^{t^*}) \|^2 \cdot \left( 1 + \frac{\Lambda^2 \eta^2 D^2}{50T} \right)^{t +1 - t^*} \\
 \text{(since $\eta < 1/(30L)$)} & \leq \frac{D^2}{\eta^2 T (1-1/900)} \cdot \left( 1 + \frac{\Lambda^2 \eta^2 D^2}{50T} \right)^T \leq \frac{2D^2}{\eta^2 T},
\end{align*}
where the last inequality holds as long as $\Lambda^2 \eta^2 D^2/50 \leq \frac 12$, or equivalently, $\eta \leq \frac{5}{ \Lambda D}$. In particular, we get that
$
\| F(\bz\^T) \| \leq \frac{2 D}{\eta \sqrt{T}}.
$
If $F(\bx,\by) =  \matx{ \nabla_\bx f(\bx,\by) \\ -\nabla_\by f(\bx,\by)} $, for some convex-concave function $f$, then, writing $\MX = \MB(\bx^*, D), \MY = \MB(\by^*, D)$, we have
\begin{align*}
  &   \max_{\by' \in \MY} f(\bx\^T, \by')  - \min_{\bx' \in \MX}  f(\bx', \by\^T) \\
  & \leq \max_{\by' \in \MY} \lng \nabla_\by f(\bx\^T, \by\^T), \by' - \by\^T \rng + \max_{\bx' \in \MX} \lng \nabla_\bx f(\bx\^T, \by\^T), \bx\^T - \bx' \rng \\
  & = \max_{\bz' \in \MZ} \lng F(\bz\^T), \bz\^T - \bz' \rng \leq \| F(\bz\^T) \| \cdot D\sqrt{2} \leq \frac{2\sqrt{2}D^2}{\eta \sqrt{T}}.
\end{align*}
\vspace{-1cm}
\end{proof}

\section{Conclusion and Future Work}
In this paper we establish a $O(1/\sqrt{T})$ upper bound on the primal-dual gap for the $T$th iterate of EG, and show that this is tight among 1-SCLI algorithms. This is slower than the primal-dual gap of $O(1/T)$ for the average of the first $T$ iterates of EG (\cite{nemirovski_prox-method_2004}). An interesting direction for future work is to determine if there is a provable benefit to averaging in the nonconvex-nonconcave case. Some experimental work has suggested that such a benefit to averaging exists (\cite{yazici_unusual_2019}); moreover, averaging is effective even for large-scale GANs (\cite{brock_large_2019}).

Another direction for future work is to extend the lower bound of Theorem \ref{thm:1scli_lb} (or prove a stronger upper bound) for algorithms with decaying step-sizes, which correspond to {\it non-stationary} CLIs. Such a question is only nontrivial for the case of 1-CLIs, as the averaged iterates of extragradient can be written as the iterates of a particular 2-CLI\footnote{We refer the reader to Section \ref{sec:2scli-averaged} for the verification of this statement.}, and the $O(DL^2/T)$ rate of convergence of the averaged iterates of extragradient is known to be optimal (\cite{nemirovski_prox-method_2004}). Towards this question, we show in Section \ref{sec:eg-timevarying} that, in contrast to the case for non-smooth convex minimization (\cite{jain_making_2019,shamir_stochastic_2012}), any choice of decaying step-size for the EG algorithm cannot improve the $\Omega(1/\sqrt{T})$ lower bound from Theorem \ref{thm:1scli_lb}.

\if 0
\section{Discussion}\label{sec:disc}
In this paper, we study the optimality of the last iterate of the extragradient (EG) algorithm for solving smooth convex-concave saddle point problems. The averaged iterates of EG are known to converge at a rate of $\mathcal{O}(1/T)$ when measured with respect to the primal dual gap. However, to the best of our knowledge, the convergence rate of the last iterate of EG in this setting was previously unknown. We show that the last iterate of EG converges at a rate of $\mathcal{O}(1/\sqrt{T})$ for smooth convex-concave saddle point problems. Moreover, we show this rate is tight by proving a matching lower bound. This result shows that in the case of convex-concave saddle point problems, the ergodic iterates converge at a rate faster (by a factor of $\sqrt{T}$) than the last iterate, showing that averaging iterates is provably beneficial in this setting.
\fi

\acks{We are grateful to Aleksander Madry for helpful suggestions and to anonymous reviewers for helpful comments on the paper.}

\bibliography{minimax_opt_finalpaper}

\appendix

\section{Proofs for Theorem \ref{thm:1scli_lb}}
\subsection{Proof of items 2 and 3 of Theorem \ref{thm:1scli_lb}}
\label{sec:lb_proof}
  \begin{proof}(of items 2 and 3 of Theorem \ref{thm:1scli_lb})
  We begin with item 2, namely the lower bound on the primal-dual gap. The choice of $\bM, \bA, \bb_1, \bb_2$ (which depend on $\nu \in (0,L]$) is exactly the same as for item 1, and is given in (\ref{eq:choose_A}). 
Write $\MZ := \MB(\bx^*, D) \times\MB(\by^*, D)$. Next we compute $\Gap_f^{\MZ}(\bz\^t)$ in a similar manner to the Hamiltonian in (\ref{eq:use_1_commute}). The components of the primal-dual gap $\Gap_f^\MZ(\bx,\by)$ for a given point $(\bx,\by) \in \BR^n$ are given as follows:
  \begin{align*}
    \max_{\by' \in \MY} f(\bx,\by')  &= \bb_1^\t \bx + \max_{\by' : \| \by'- \by^* \| \leq D} \langle \by', \bM^\t \bx + \bb_2 \rangle \\
                                     & = D \| \bM^\t \bx + \bb_2\|  +\lng \by^*, \bM^\t \bx + \bb_2 \rng +  \bb_1^\t \bx \\
                                     & = D \| \bM^\t \bx + \bb_2 \| + \lng -\bM^{-1} \bb_1, \bM^\t \bx  + \bb_2 \rng + \lng \bb_1, \bx \rng \\
                                     & = D \| \bM^\t \bx + \bb_2 \| - \lng \bM^{-1} \bb_1, \bb_2 \rng.\\
    -\min_{\bx' \in \MX} f(\bx',\by) &= - \bb_2^\t \by -\min_{\bx' : \| \bx' - \bx^* \| \leq D} \langle \bx', \bM\by + \bb_1\rangle \\
                                     & = D \| \bM \by + \bb_1 \| - \lng \bx^*, \bM \by + \bb_1 \rng - \bb_2^\t \by \\
                                     & = D \| \bM \by + \bb_1 \| - \lng -(\bM^\t)^{-1} \bb_2, \bM \by + \bb_1 \rng - \lng \bb_2, \by \rng \\
                                     & = D \| \bM \by + \bb_1 \| + \lng \bM^{-1} \bb_1, \bb_2 \rng.
  \end{align*}
  Thus
  \begin{equation}
    \label{eq:gap-ham-D}
\Gap_f^\MZ(\bx, \by) = \max_{\by' \in \MY} f(\bx, \by') - \min_{\bx' \in \MX} f(\bx', \by) = D \| \bM^\t \bx + \bb_2 \| + D \| \bM \by + \bb_1 \| = D \| \bA \bz + \bb \|,
\end{equation}
and so
\begin{equation}
  \label{eq:gap_zt_exact}
\Gap_f^\MZ(\bx\^t, \by\^t) = D \| \bC_0(\bA)^t \bb \|.
\end{equation}

From (\ref{eq:gap_zt_exact}) we have
\begin{equation}
  \label{eq:gap_ratio}
\frac{\Gap_f^\MZ(\bx\^t, \by\^t)}{\| \bz\^0 - \bz^* \|^2} = \frac{D \| \bC_0(\bA)^t \bb \|}{\| \bA^{-1} \bb \|^2} = \frac{\nu \| \bC_0(\bA)^t \bb \|}{ \| \bb \|}.
\end{equation}

 The desired bound in item 2 of the theorem statement follows from (\ref{eq:gap_ratio}), (\ref{eq:use_k2}), and the fact that $\| \bA^{-1} \bb \| = D$.

Next we turn to convergence in function value (item 3 of the theorem). First note that
\begin{align}
  & f(\bx\^t, \by\^t) - f(\bx^*, \by^*) \nonumber\\
  =&  (\bx\^t)^\t \bM \by\^t + \lng \bx\^t, \bb_1 \rng + \lng \by\^t, \bb_2 \rng - (\bx^*)^\t \bM \by^* - \lng \bx^*, \bb_1 \rng - \lng \by^*, \bb_2 \rng \nonumber\\
  =& \lng \bx\^t - \bx^*, \bM (\by\^t - \by^*) \rng - 2 \lng \bx^*, \bM \by^* \rng + \lng \bx^*, \bM \by^* \rng + \lng \by^*, \bM^\t \bx^* \rng \nonumber \\ 
  =& \lng \bx\^t - \bx^*, \bM (\by\^t - \by^*) \rng\nonumber\\
  \label{eq:fv_calc}
  =& \nu \cdot \sum_{i=1}^{n/2} (\bx_i\^t - \bx^*_i) \cdot (\by_i\^t - \by^*_i), 
\end{align}
where we have used that $\by^* = -\bM^{-1} \bb_1, \bx^* = -(\bM^\t)^{-1} \bb_2$.

Note that the diagonalization of $\bA$ can be written as 
$$
\bA = \bU \cdot \diag(\nu i, \cdots, \nu i, -\nu i, \cdots -\nu i) \cdot \bU^{-1}, \quad \quad \bU = \frac{1}{\sqrt{2}} \cdot \matx{1 & 0 & \cdots & 0 & 1 & 0 & \cdots & 0 \\
  0 & 1 & \cdots & 0 & 0 & 1 & \cdots & 0 \\
  & & \vdots & & & & \vdots & \\
  i & 0 & \cdots & 0 & -i &  0 & \cdots & 0 \\
  0 & i & \cdots & 0 & 0 & -i & \cdots & 0 \\
  & & \vdots & & & & \vdots & }.  
$$
Since $\bU$ is unitary, it follows from (\ref{eq:zt_explicit}) that
\begin{align*}
&  \bz\^t  - \bz^* \nonumber \\
  =& \bC_0(\bA)^t \bA^{-1}\bb \\
  =& \nu^{-1} \bU \cdot  \diag(q_0(\nu i), \cdots, q_0(\nu i), q_0(-\nu i), \cdots, q_0(-\nu i))^t \cdot  \diag(-i, \cdots, -i, i, \cdots, i) \cdot \bU^{-1} \bb  \\
  =&  \frac{D}{\sqrt{2n}} \cdot \bU \cdot (q_0(\nu i)^t(1-i) , \ldots, q_0(\nu i)^t (1-i), q_0(-\nu i)^t (1+i), \ldots, q_0(-\nu i)^t (1+i))^\t  \\
  =& \frac{D}{\sqrt{n}} \cdot (\Re(q_0(\nu i)^t (1-i)), \ldots,\Re( q_0(\nu i)^t (1-i)),- \Im(q_0(\nu i)^t(1-i)), \ldots,- \Im(q_0(\nu i)^t(1-i))) .
\end{align*}
Now let us write $q_0(\nu i) = |q_0(\nu i)| \cdot e^{i\theta(\nu)}$, where $\theta(\nu) \in [0, 2\pi)$. It folllows from (\ref{eq:fv_calc}) that
\begin{align}
  f(\bx\^t, \by\^t) &- f(\bx^*, \by^*) \nonumber \\
  =&\nu \sum_{i=1}^{n/2} (\bx_i\^t - \bx_i^*) \cdot (\by_i\^t - \by_i^*)\nonumber \\
  =&\nu D^2 \cdot  \frac 12 \cdot |q_0(\nu i)|^{2t} (\cos(t\theta(\nu)) + \sin(t\theta(\nu))) \cdot (\cos(t\theta(\nu)) - \sin(t\theta(\nu)))\nonumber \\
  =& \nu D^2 \cdot \frac{1}{2} \cdot |q_0(\nu i)|^{2t} \cdot \cos(2 t \theta(\nu)) \nonumber\\
  \label{eq:func_final}
  =& \nu D^2\cdot \frac{1}{2} \cdot \Re(q_0(\nu i)^{2t}).
\end{align}
Now fix some $T$. It follows in a manner identical to (\ref{eq:use_k2}), using Lemma \ref{lem:k2}, that there is some $\nu_*$ with $\nu_*^2 \in [L^2/(40Tk^2),L^2]$ so that $ \nu_*^2 \cdot |q_0(\nu_* i)|^{8T} \geq \frac{L^2}{80 Tk^2}$, which implies $\nu_* \cdot |q_0(\nu_* i)|^{4T} \geq \frac{L}{\sqrt{80T} k}$. We claim that also $\nu_* \cdot |q_0(\nu_* i)|^{2T} \geq \frac{L}{\sqrt{80T} k}$. If $|q_0(\nu_* i)| \geq 1$, this is immediate from $\nu_* \geq L/(\sqrt{40T}k)$; otherwise, this follows from $|q_0(\nu_* i)|^{2T} \geq |q_0(\nu_* i)|^{4T}$. To complete the proof we consider two cases:

{\bf Case 1.} If $|\Re(q_0(\nu_* i)^{2T})| \geq \frac 12 \cdot |q_0(\nu_* i)^{2T}|$, then by (\ref{eq:func_final}) $|f(\bx\^T, \by\^T) - f(\bx^*, \by^*)| \geq \frac{LD^2}{\sqrt{1280T}k}$ (where $f$ is so that $\nu$ in (\ref{eq:choose_A}) is set to $\nu_*$), and we get that $\IC_{n,L,D}(\MA, \MLfunc; T) \geq \frac{LD^2}{\sqrt{1280T}k} \geq \frac{LD^2}{36k\sqrt{T}}$.  

{\bf Case 2.} In the other case that $|\Re(q_0(\nu_* i)^{2T})| \leq \frac 12 \cdot |q_0(\nu_* i)^{2T}|$, we have $2T \theta(\nu_*) \in [\pi/3, 2\pi/3] \cup [-2\pi/3, -\pi/3]$. Hence $4T \theta(\nu_*) \in [2\pi/3, 4\pi/3]$
, and so $|\Re(q_0(\nu_* i)^{4T})| \geq \frac 12 \cdot |q_0(\nu_* i)^{4T}|$. By (\ref{eq:func_final}) and the fact that $\nu_* \cdot |q_0(\nu i)|^{4T} \geq \frac{L}{\sqrt{80T}k}$, it follows that in this case we have $\IC_{n,L,D}(\MA, \MLfunc; 2T) \geq \frac{LD^2}{\sqrt{1280T}k}$. 
\end{proof}

\subsection{Supplementary lemmas for Theorem \ref{thm:1scli_lb}}
\label{sec:lb_supp_lemma}
Lemma \ref{lem:k2} below is similar to the bounds derived in \cite[Section 2.3.B]{nemirovsky_information-based_1992}, but it achieves a better dependence on $t$; in particular, if the bounds in \cite[Section 2.3.B]{nemirovsky_information-based_1992} are used in a black-box manner, one would instead get a lower bound of $\Omega(L/t^2k^2)$ in (\ref{eq:k2mu}).
\begin{lemma}
  \label{lem:k2}
  Fix some $k, t \in \BN$, $L > 0$. Let $r(y) \in \BR[y]$ be a polynomial with real-valued coefficients of degree at most $k$, such that $r(0) = 1$. Then
  \begin{equation}
    \label{eq:k2mu}
\sup_{y \in (0,L]} y \cdot |r(y)|^t \geq \sup_{y \in [L/(20tk^2), L]} y \cdot |r(y)|^t > \frac{L}{40tk^2}.
  \end{equation}
\end{lemma}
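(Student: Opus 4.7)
The first inequality in \eqref{eq:k2mu} is trivial, so the task is to lower bound the supremum of $y \cdot |r(y)|^t$ over $[a,L]$ for $a := L/(20tk^2)$. The plan is to invoke the classical Chebyshev extremal inequality: for any real polynomial $p$ of degree at most $k$ and any point $y_0 \notin [a,b]$, one has
\[
|p(y_0)| \;\leq\; T_k\!\left(\frac{|2y_0 - a - b|}{b-a}\right) \cdot \max_{y \in [a,b]} |p(y)|,
\]
where $T_k$ is the Chebyshev polynomial of the first kind (see, e.g., Rivlin's \emph{Chebyshev Polynomials}). Taking $y_0 = 0$, $b = L$, and $p = r$, and using $r(0) = 1$, this yields
\[
\max_{y \in [a,L]} |r(y)| \;\geq\; \frac{1}{T_k\!\left(\frac{a+L}{L-a}\right)}.
\]

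The main technical step is to bound $T_k\!\left((a+L)/(L-a)\right)$ from above. Writing $\cosh\beta = (a+L)/(L-a)$ for some $\beta \geq 0$, the identity $\cosh\beta - 1 = 2\sinh^2(\beta/2)$ together with $(a+L)/(L-a) - 1 = 2a/(L-a)$ gives $\sinh(\beta/2) = \sqrt{a/(L-a)}$. Since $\sinh(x) \geq x$ for $x \geq 0$, this forces $\beta \leq 2\sqrt{a/(L-a)}$, and since $a \leq L/20$ I would get $\beta \leq 2\sqrt{(20/19)\cdot (a/L)} = 2/\sqrt{19\,tk^2}$. Hence $k\beta \leq 2/\sqrt{19t}$, and the estimate $\cosh(x) \leq e^{x^2/2}$ yields
\[
T_k\!\left(\tfrac{a+L}{L-a}\right)^{\!t} \;=\; \cosh(k\beta)^t \;\leq\; e^{2/19} \;<\; 2.
\]

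Combining the two estimates, there exists some $y^* \in [a,L]$ with $|r(y^*)|^t \geq e^{-2/19} > 1/2$, whence
\[
y^* \cdot |r(y^*)|^t \;\geq\; a \cdot e^{-2/19} \;=\; \frac{L \cdot e^{-2/19}}{20\, tk^2} \;>\; \frac{L}{40\, tk^2},
\]
proving the lemma. The main obstacle I expect is simply tracking constants precisely: the choice $a = L/(20tk^2)$ is calibrated so that the Chebyshev bound $T_k^t \leq e^{2/19}$ (a constant strictly less than $2$) combines with the factor $y^* \geq a$ to deliver the target $L/(40tk^2)$ with a bit of slack. The one nonstandard ingredient is the Chebyshev extremal inequality itself, which is classical and, if preferred, could be derived from the identities $T_k(\cos\theta) = \cos(k\theta)$ and $T_k(\cosh\theta) = \cosh(k\theta)$ together with the Chebyshev equioscillation characterization.
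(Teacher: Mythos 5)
Your proof is correct, and it takes the same essential route as the paper: reduce to the classical Chebyshev extremal inequality $\max_{[a,L]}|r| \geq 1/T_k\!\left(\frac{a+L}{L-a}\right)$, then show that $T_k$ is close to $1$ when $a/L$ is small compared to $1/k^2$. The paper packages the Chebyshev step as a separate lemma (Lemma~\ref{lem:k2_condition}, citing \cite[Lemma 2]{arjevani_iteration_2016}) and bounds $T_k\!\left(\frac{\kappa+1}{\kappa-1}\right)$ from above via the algebraic identity $T_k\!\left(\frac{1}{2}(z + 1/z)\right) = \frac{1}{2}(z^k + z^{-k})$ with $z = \frac{\sqrt{\kappa}+1}{\sqrt{\kappa}-1}$, followed by a second-order Taylor-type estimate $(1+y)^k \leq 1 + yk + 2y^2k^2$; the resulting bound $\sup|r| > 1 - 6k^2/(\sqrt{\kappa}-1)^2$ is then raised to the $t$-th power in Lemma~\ref{lem:k2} using $(1-1/(2t))^t \geq 1/2$. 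You instead parametrize $\frac{a+L}{L-a} = \cosh\beta$, use the half-angle identity to get $\sinh(\beta/2) = \sqrt{a/(L-a)}$, and bound $T_k(\cosh\beta)^t = \cosh(k\beta)^t$ directly via $\cosh(x) \leq e^{x^2/2}$. These are equivalent parametrizations ($z = e^\beta$), but your version absorbs the $t$-th power in one step and avoids the explicit polynomial Taylor bookkeeping, so it is arguably the cleaner way to organize the same estimate. Your constant check is also fine: $e^{-2/19} > 1/2$ gives the strict inequality, and the condition $a \leq L/20$ needed to control $a/(L-a)$ holds automatically since $tk^2 \geq 1$.
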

\begin{proof}
  Set $\mu := \frac{L}{20tk^2}$. Then $\sqrt{L/\mu} - 1 = \sqrt{20tk^2}  -1 \geq \sqrt{12t} \cdot k$. By Lemma \ref{lem:k2_condition} we have that
  $$
\sup_{y \in [\mu, L]} y \cdot |r(y)|^t \geq \frac{L}{20tk^2} \cdot \left(1 - \frac{6k^2}{(\sqrt{L/\mu} - 1)^2} \right)^t \geq \frac{L}{20tk^2} \cdot (1-1/(2t))^t \geq \frac{L}{40tk^2}.
  $$
\end{proof}

\begin{lemma}
  \label{lem:k2_condition}
  Fix some $k \in \BN$ and $L > \mu > 0$ such that $k \leq \sqrt{L/\mu} - 1$. Let $r(y) \in \BR[y]$ be a polynomial with real-valued coefficients of degree at most $k$, such that $r(0) = 1$. Then
  \begin{equation}
\label{eq:k2mul}
\sup_{y \in [\mu,L]} |r(y)| > 1 - \frac{6k^2}{(\sqrt{L/\mu} - 1)^2}.
  \end{equation}
\end{lemma}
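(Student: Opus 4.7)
The plan is to reduce the lemma to the classical Chebyshev extremal problem: among polynomials of degree at most $k$ that take the value $1$ at the origin, determine the one whose sup-norm on the interval $[\mu, L]$ is smallest. The answer, which we would invoke from standard approximation theory, is the rescaled Chebyshev polynomial
\[
p^*(y) \;=\; \frac{T_k\!\left(\frac{2y - L - \mu}{L - \mu}\right)}{T_k\!\left(-\frac{L+\mu}{L-\mu}\right)},
\]
where $T_k$ is the Chebyshev polynomial of the first kind. The change of variables $y \mapsto (2y - L - \mu)/(L-\mu)$ maps $[\mu,L]$ bijectively onto $[-1,1]$, on which $|T_k| \leq 1$, while $y = 0$ is sent to $-(L+\mu)/(L-\mu)$; combined with $|T_k(-x)| = T_k(x)$, this yields $\sup_{y \in [\mu,L]} |p^*(y)| = 1/T_k\!\left(\frac{L+\mu}{L-\mu}\right)$, and the minimality of $p^*$ follows from the standard equioscillation argument (which I would include as a short aside). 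Therefore every polynomial $r$ with $\deg r \leq k$ and $r(0)=1$ satisfies $\sup_{y \in [\mu,L]} |r(y)| \geq 1/T_k\!\left(\frac{L+\mu}{L-\mu}\right)$.

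The remaining work is to lower-bound $1/T_k\!\left(\frac{L+\mu}{L-\mu}\right)$. Using $T_k(\cosh\theta) = \cosh(k\theta)$, I would set $\cosh\theta = (L+\mu)/(L-\mu) = (\kappa+1)/(\kappa-1)$ with $\kappa := L/\mu$. A direct computation (using $\sinh^2\theta = \cosh^2\theta - 1$ and $e^\theta = \cosh\theta + \sinh\theta$) gives the clean expression $e^\theta = (\sqrt{\kappa}+1)/(\sqrt{\kappa}-1)$, hence
\[
\theta \;=\; \ln\!\left(1 + \frac{2}{\sqrt{\kappa}-1}\right) \;\leq\; \frac{2}{\sqrt{\kappa}-1}.
\]
Invoking the hypothesis $k \leq \sqrt{\kappa}-1$, this gives $k\theta \leq 2$.

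The final step is the elementary scalar inequality $1/\cosh(x) \geq 1 - x^2/2$, valid for all $x \geq 0$ (trivial when $x \geq \sqrt{2}$; for $x \in [0,\sqrt{2}]$, it follows by comparing derivatives, since $\tanh(x)/\cosh(x) \leq \tanh(x) \leq x$, so the derivative of $1/\cosh(x)$ dominates that of $1 - x^2/2$). Applying this with $x = k\theta$ and combining with the previous bound yields
\[
\frac{1}{T_k\!\left(\tfrac{L+\mu}{L-\mu}\right)} \;=\; \frac{1}{\cosh(k\theta)} \;\geq\; 1 - \frac{(k\theta)^2}{2} \;\geq\; 1 - \frac{2k^2}{(\sqrt{\kappa}-1)^2},
\]
which is strictly stronger than the claimed bound with constant $6$. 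The main (minor) obstacle is supplying a self-contained justification of the Chebyshev minimality statement; everything else is a short chain of elementary estimates. Note the slack between $2$ and $6$ in the constant gives flexibility in how loosely one bounds $\theta$ and $1/\cosh$, which should make the writeup robust.
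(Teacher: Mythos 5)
Your proposal is correct and follows essentially the same route as the paper: both reduce to the Chebyshev extremal polynomial $T_k\bigl((2y-L-\mu)/(L-\mu)\bigr)$ normalized at $y=0$, invoke the Chebyshev minimality property (the paper cites \cite[Lemma 2]{arjevani_iteration_2016} where you would supply the standard equioscillation argument), and then bound $1/T_k\bigl(\tfrac{\kappa+1}{\kappa-1}\bigr)$ using the identity $e^{\theta}=\tfrac{\sqrt{\kappa}+1}{\sqrt{\kappa}-1}$ (the paper phrases this via $T_k(\tfrac12(z+\tfrac1z))=\tfrac12(z^k+z^{-k})$, you via $T_k(\cosh\theta)=\cosh(k\theta)$, which are the same parametrization). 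The only substantive difference is in the final elementary estimate: you use $1/\cosh x\geq 1-x^2/2$ together with $k\theta\leq 2$, which is cleaner and yields the sharper constant $2$ in place of the paper's $6$ (obtained there via $(1+y)^k\leq 1+yk+2y^2k^2$), and both comfortably establish the stated strict inequality.
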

Lemma \ref{lem:k2_condition} is very similar to the combination of Lemmas 5 and 12 in \cite{azizian_tight_2019}, but has a superior dependence on $k$. In particular, we could use \cite[Lemmas 5 \& 12]{azizian_tight_2019} to conclude that a lower bound of $1 - k^3 \cdot \frac{4\mu}{L\pi}$ holds in (\ref{eq:k2mul}), which is smaller than $1 - \frac{6k^2}{(\sqrt{L/\mu} - 1)^2}$ for sufficiently large $k$ (e.g., $k > 10$). We also remark that the proof of Lemma \ref{lem:k2_condition} is much simpler than that of \cite[Lemmas 5 \& 12]{azizian_tight_2019}, though the proofs use similar techniques.
\begin{proof}(of Lemma \ref{lem:k2_condition}).
  Let $T_k(y)$ denote the Chebyshev polynomial of the first kind of degree $k$; it is characterised by the property that:
  \begin{equation}
    \label{eq:tk_define}
T_k\left( \cos\left( \frac{j\pi}{k} \right) \right) = (-1)^j, \quad \quad j = 0, 1, \ldots, k,
\end{equation}
which turns out to be equivalent to the property that
\begin{equation}
  \label{eq:tk_z}
T_k\left( \frac{1}{2} \cdot \left( z + \frac 1z \right)\right) = \frac{1}{2} \cdot \left( z^k + \frac{1}{z^k} \right), \quad \quad \forall z \in \BC.
\end{equation}
It follows immediately from (\ref{eq:tk_z}), that for $k$ odd, $T_k$ is an odd function, and for $k$ even, $T_k$ is an even function. 

Let $q(y) = \frac{T_k\left( \frac{2y - (\mu + L)}{L-\mu}\right)}{T_k\left( \frac{L+\mu}{L-\mu}\right)}$. 
Then $q(0) = 1$. Using (\ref{eq:tk_define}) and the fact that $r(0) = q(0) = 1$, it was shown in \cite[Lemma 2]{arjevani_iteration_2016} that
$$
\sup_{y \in [\mu, L]} |r(y)| \geq \sup_{y \in [\mu, L]} |q(y)|.
$$
Let $\kappa = L/\mu$. From (\ref{eq:tk_define}) we have that $\sup_{y \in [\mu, L]} |q(y)| \geq \frac{1}{T_k\left( \frac{L+\mu}{L-\mu}\right)} = \frac{1}{T_k \left(\frac{\kappa + 1}{\kappa - 1}\right)}$ (in fact, equality holds). At this we depart from the proof of \cite[Lemma 2]{arjevani_iteration_2016}, noting simply that a tighter lower bound on $\frac{1}{T_k\left(\frac{\kappa + 1}{\kappa - 1}\right)}$ than the one shown in \cite[Lemma 2]{arjevani_iteration_2016} holds when $k^2 \ll \kappa$.  In particular, since $\frac{\sqrt{\kappa} + 1}{\sqrt{\kappa} - 1} + \frac{\sqrt{\kappa} - 1}{\sqrt{\kappa} + 1} =2 \cdot  \frac{\kappa + 1}{\kappa - 1}$, (\ref{eq:tk_z}) gives that
\begin{align}
  T_k \left( \frac{\kappa + 1}{\kappa - 1} \right) =&   \frac 12 \cdot \left( \left(\frac{\sqrt{\kappa} + 1}{\sqrt{\kappa} - 1}\right)^k + \left(\frac{\sqrt{\kappa} - 1}{\sqrt{\kappa} + 1}\right)^k\right)\nonumber\\
  \label{eq:exp_x2}
  < & \frac{1}{2} \cdot \left( \left(1 + \frac{2k}{\sqrt{\kappa} - 1} + \frac{(2k)^2}{(\sqrt{\kappa} - 1)^2} \right) + \left(1 - \frac{2k}{\sqrt{\kappa} + 1} + \frac{(2k)^2}{(\sqrt{\kappa} + 1)^2}\right)\right) \\
  \leq & \frac{1}{2} \cdot \left( 2 + \frac{4k}{\kappa - 1} + \frac{8k^2}{(\sqrt{\kappa} - 1)^2}\right)\nonumber\\
  \label{eq:tk_final}
  \leq & 1+ \frac{2k + 4k^2}{(\sqrt\kappa - 1)^2}.
\end{align}
Above (\ref{eq:exp_x2}) follows from the fact $k \leq \sqrt{\kappa} - 1$ and that for $-2 \le yk \le 2$, we have that
$$
(1+y)^k \leq \exp(yk) \leq 1 + yk + 2 y^2 k^2.
$$
From (\ref{eq:tk_final}) it follows that $$\frac{1}{T_k\left( \frac{\kappa + 1}{\kappa -1}\right)} > 1 - \frac{2k + 4k^2}{(\sqrt \kappa -1)^2} \geq 1 - \frac{6k^2}{(\sqrt \kappa -1)^2}.
$$
\end{proof}

\subsection{Tightness of dependence on the degree $k$}
The below proposition establishes that the inverse linear dependence on $k$ in Theorem \ref{thm:1scli_lb} is tight:
\begin{proposition}
  \label{prop:lb-tight-k}
 Then there is a consistent 1-SCLI $\MA$ whose inversion matrix $\bN(\cdot)$ is a polynomial of degree at most $k-1$ so that 
  \begin{equation}
    \label{eq:lb-tight-k}
    \IC_{n,L,D}(\MA, \MLham; 1) \leq O \left( \frac{L^2D^2}{k^2}\right), \ \  \max \left\{ \IC_{n,L,D}(\MA, \MLgap; 1), \IC_{n,L,D}(\MA, \MLfunc; 1) \right\} \leq O \left( \frac{LD^2}{k} \right).
  \end{equation}
  Iteration complexities are defined with respect to $\MFbil_{n,L,D}$ as in Definition \ref{def:ic}.
\end{proposition}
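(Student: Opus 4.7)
The plan is to exhibit a consistent 1-SCLI based on Chebyshev polynomials that saturates all three bounds in (\ref{eq:lb-tight-k}) simultaneously. By Lemma \ref{lem:consistency}, specifying such an algorithm is equivalent to choosing a single polynomial $p(y) := \bC_0(y) = 1 + y\,\bN(y)$ of degree at most $k$ with real coefficients and $p(0) = 1$. Starting at $\bz\^0 = \bbzero$ and using $F(\bz) = \bA\bz + \bb$ as in (\ref{eq:linear_F}), one iteration of (\ref{eq:scli-definition}) yields $\bz\^1 = \bN(\bA)\bb = \bA^{-1}(p(\bA)-I)\bb$, so $\bz\^1 - \bz^* = \bA^{-1} p(\bA)\bb$ and $F(\bz\^1) = p(\bA)\bb$. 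For $F \in \MFbil_{n,L,D}$, $\bA$ is normal with eigenvalues $\pm i\nu_j$ (where $\nu_j \in (0,L]$ are the singular values of $\bM$) and $\|\bA^{-1}\bb\| = D$; this yields the spectral inequalities $\|F(\bz\^1)\| \leq D \sup_{\nu \in (0,L]} \nu |p(i\nu)|$ and $\|\bz\^1 - \bz^*\| \leq D \sup_{\nu \in (0,L]} |p(i\nu)|$.

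The crux is to choose $p$ so that both suprema are small. Let $T_m$ denote the $m$-th Chebyshev polynomial of the first kind, and set
\[
p(y) \;=\; \frac{iL}{y\,T_m'(0)} \cdot T_m(-iy/L),
\]
with $m = k+1$ for $k$ even and $m = k$ for $k$ odd, so that $m$ is odd in either case. Since $T_m$ is then odd, $y \mapsto iT_m(-iy/L)$ is a real odd polynomial in $y$ divisible by $y$; after dividing by $y$ and normalizing by $T_m'(0) = \pm m$, $p$ is a real polynomial of degree at most $k$ with $p(0) = 1$. Direct evaluation on the imaginary axis gives $p(i\nu) = L\,T_m(\nu/L)/(\nu\, T_m'(0))$, so $|T_m| \leq 1$ on $[-1,1]$ immediately yields $\sup_\nu \nu |p(i\nu)| \leq L/m$, while the classical identity $\max_{x \in [-1,1]}|T_m(x)/x| = |T_m'(0)| = m$ for odd $m$ (provable via $T_m'(\cos\theta) = m\sin(m\theta)/\sin\theta$ and elementary calculus) gives $\sup_\nu |p(i\nu)| \leq 1$.

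Combining these two polynomial estimates with the spectral bounds from the first paragraph establishes (\ref{eq:lb-tight-k}). The Hamiltonian bound is immediate: $\|F(\bz\^1)\|^2 \leq D^2 L^2/m^2 = O(L^2D^2/k^2)$. For the primal-dual gap, the same computation that produced (\ref{eq:gap-ham-D}) specialized in the proof of item 2 of Theorem \ref{thm:1scli_lb} extends to general $\bM$ and shows $\Gap_f^\MZ(\bz) = D(\|\bM\by+\bb_1\| + \|\bM^\t \bx+\bb_2\|) \leq \sqrt{2}\,D\,\|F(\bz)\|$, giving the stated $O(LD^2/k)$ bound. For the function value, the stationarity identities $\bM\by^* = -\bb_1$ and $\bM^\t\bx^* = -\bb_2$ reduce $f(\bz) - f(\bz^*)$ to $(\bx-\bx^*)^\t \bM(\by-\by^*) = \lng \bx-\bx^*, \nabla_\bx f(\bz)\rng$, so Cauchy--Schwarz yields $|f(\bz\^1) - f(\bz^*)| \leq \|\bz\^1 - \bz^*\|\cdot\|F(\bz\^1)\| \leq D \cdot DL/m = O(LD^2/k)$. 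The main technical step is the identity $\max_x |T_m(x)/x| = m$ for odd $m$: this is what prevents $\bz\^1$ from drifting far from $\bz^*$ under a Chebyshev-acceleration-style polynomial, and without it the function-value and distance-based bounds would fail despite the small Hamiltonian.
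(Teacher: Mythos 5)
Your proof is correct, but it takes a genuinely different route from the paper's. The paper constructs the 1-SCLI by packaging $T = \lfloor (k-1)/2 \rfloor$ iterations of averaged extragradient (step size $1/(2L)$) into a single step, so that $\bz\^1 = \bar\bw\^T$, and then imports the known $O(D^2L/T)$ ergodic rate from \cite{mokhtari_convergence_2019}; the Hamiltonian bound then follows from the identity $\Gap_f^\MZ(\bz) = D\|\bA\bz + \bb\|$. Your construction instead exhibits the extremal polynomial directly: you take $\bC_0 = p$ to be a normalized odd Chebyshev polynomial, read off the two estimates $\sup_{\nu \in (0,L]} \nu\,|p(i\nu)| \leq L/m$ and $\sup_{\nu \in (0,L]} |p(i\nu)| \leq 1$ from $|T_m| \le 1$ and the identity $\max_{[-1,1]} |T_m(x)/x| = |T_m'(0)| = m$ (the latter has a one-line induction via $\cos(m\theta) = 2\cos\theta\cos((m-1)\theta) - \cos((m-2)\theta)$, if you want to make the ``elementary calculus'' remark precise), and then combine with the spectral decomposition of the skew-symmetric normal $\bA$. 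This is more self-contained — it avoids bootstrapping off the Mokhtari et al.\ ergodic bounds — and it is pleasingly dual to the Chebyshev argument in Lemma \ref{lem:k2_condition} used for the lower bound, making the ``$k$-dependence is tight'' claim transparent at the polynomial level. What the paper's approach buys is a conceptual point: the near-optimal 1-SCLI of degree $k$ is essentially a repackaging of $\Theta(k)$ steps of averaged extragradient, reinforcing the narrative that averaging is exactly what bridges the $\Omega(1/\sqrt T)$ and $O(1/T)$ rates. One small bookkeeping remark: for $k \in \{1,2\}$ your $m$ is $1$ or $3$; at $m = 1$ the construction degenerates to $p \equiv 1$ (not strictly contractive), but the claimed $O(LD^2/k)$ bound is trivial in that regime, so nothing is lost.
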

{\bf Remark.} We note that the upper bounds in Proposition \ref{prop:lb-tight-k} hold more generally with respect to any monotone linear operator $F(\bz) = \bA \bz + \bb$. We stick with the class $\MFbil_{n,L,D}$ of operators corresponding to a bilinear function $f$ (as in Definition \ref{def:ic}) to simplify notation. 
  \begin{proof}(of Proposition \ref{prop:lb-tight-k})
  Consider a monotone operator $F = F_f \in \MFbil_{n,L,D}$ of the form $F(\bz) = \bA \bz + \bb$. For $t \geq 0$, let $\bw\^t$ be the iterates obtained by running extragradient on the monotone operator $F$ starting at $\bw\^0 = \bbzero$ and with step size $\eta = 1/(2L)$. Letting
$$
\bC_0(\bA) := I - \eta \bA + (\eta\bA)^2, \qquad \bN(\bA) := -\eta (I - \eta \bA) \bb,
$$
by (\ref{eq:eg_scli}), we can write
$$
\bw\^t = (\bC_0(\bA)^t + \bC_0(\bA)^{t-1} + \cdots + \bC_0(\bA) + I) \cdot \bN(\bA) \bb.
$$
For any $T > 0$, denote the averaged iterates up to time $T$ by $\bar \bw\^T := \frac{\bw\^0 + \cdots + \bw\^T}{T+1}$. Also write $\bar \bw\^T = (\hat \bx\^T, \hat \by\^T)$. By \cite[Theorem 3]{mokhtari_convergence_2019}, we have, with $\MX :=\MB(\bx^*, D), \MY := \MB(\by^*, D)$,
$$
\MLgap_F(\bar \bw\^T) = \max_{\by \in \MY} f(\hat \bx\^T, \by) - \min_{\bx \in \MX} f(\bx, \hat \by\^T) \leq O \left( \frac{D^2L}{T} \right).
$$
It follows immediately from the fact that $f$ is convex-concave that also $\MLfunc(\bar\bw\^T) \leq O \left(\frac{D^2 L}{T} \right)$. 
By (\ref{eq:gap-ham-D}) it follows that $\MLham_F(\bar \bw\^T) = \| \bA \bar \bw\^T + \bb \|^2 \leq O \left( \frac{D^2 L^2}{T^2} \right)$. 
   
Next fix $T = \lfloor \frac{k-1}{2} \rfloor$, and define the polynomial
$$
\bN'(\bA) := \frac{(\bC_0(\bA)^T + 2\bC_0(\bA)^{T-1} + \cdots + (T+1)I) \cdot \bN(\bA)}{T+1},
$$
so that $\bN'(\bA)$ is a polynomial in $\bA$ with real-valued coefficients of degree at most $2T + 1 \leq k-1 $. Also let $\bC_0'(\bA) := I + \bN'(\bA) \cdot \bA$. It is immediate from the definition of $\bN'(\cdot)$ that $\bw\^T = \bN'(\bA) \cdot \bb$, and so $\bw\^T$ is the first iterate (namely, $\bz\^1$, as in (\ref{eq:scli-definition})) in the consistent 1-SCLI defined by $\bC_0'(\cdot), \bN'(\cdot)$. 

  \end{proof}

\section{Proofs of Lemmas \ref{lem:ab_exist} and \ref{lem:ab_diff}}
\label{sec:ub_proof}

\subsection{Preliminary lemmas}
Before proving Lemmas \ref{lem:ab_exist} and \ref{lem:ab_diff} we state a few simple lemmas. 
\begin{lemma}(\cite{nesterov_cubic_2006})
  \label{lem:nspsd}
If $\MZ \subset \BR^n$ and $F : \MZ \ra \BR^n$ is monotone, then for any $\bz, \bw \in \BR^n$,
  $
\bz^\t (\partial F(\bw)) \bz \geq 0.
$
Equivalently, $\partial F(\bw) + \partial F(\bw)^\t$ is PSD. 
\end{lemma}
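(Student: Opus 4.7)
The plan is to derive the inequality from the definition of monotonicity via a first-order Taylor expansion of $F$ around $\bw$. First I would fix arbitrary $\bz \in \BR^n$ and $\bw \in \MZ$ (taken to be in the interior so that $\bw + t\bz \in \MZ$ for all sufficiently small $t > 0$; in the statement of interest $\MZ = \BR^n$ so this is automatic). Since $f$ is twice differentiable, $F$ is continuously differentiable, and so
\begin{equation*}
F(\bw + t\bz) - F(\bw) = t \cdot \partial F(\bw) \bz + o(t) \quad \text{as } t \to 0.
\end{equation*}

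Next I would apply the monotonicity inequality $\langle F(\bz') - F(\bz''), \bz' - \bz'' \rangle \geq 0$ with $\bz' = \bw + t\bz$ and $\bz'' = \bw$, obtaining
\begin{equation*}
0 \leq \langle F(\bw + t\bz) - F(\bw),\, t\bz \rangle = t^2 \cdot \bz^\t \partial F(\bw) \bz + o(t^2).
\end{equation*}
Dividing by $t^2 > 0$ and letting $t \downarrow 0$ yields $\bz^\t \partial F(\bw) \bz \geq 0$, which is the first claim.

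For the equivalence with PSD-ness of $\partial F(\bw) + \partial F(\bw)^\t$, I would simply note that for any real vector $\bz$ and matrix $\bM$, $\bz^\t \bM \bz = \tfrac{1}{2} \bz^\t (\bM + \bM^\t) \bz$, so $\bz^\t \partial F(\bw) \bz \geq 0$ holding for all $\bz \in \BR^n$ is the same as $\partial F(\bw) + \partial F(\bw)^\t$ being PSD. There is no substantive obstacle here; the only minor care needed is the domain issue (ensuring $\bw + t\bz \in \MZ$ for small $t$), which is handled by assuming $\bw$ lies in the interior of $\MZ$, or by working on $\MZ = \BR^n$ as is the case in the application of this lemma.
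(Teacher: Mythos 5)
Your proof is correct and standard. Note that the paper does not actually prove this lemma: it simply attributes it to \cite{nesterov_cubic_2006} and states it without argument. Your derivation---applying the monotonicity inequality $\langle F(\bz') - F(\bz''), \bz' - \bz'' \rangle \geq 0$ to the pair $(\bw + t\bz, \bw)$, Taylor-expanding $F$ around $\bw$, dividing by $t^2$, and passing to the limit $t \downarrow 0$---is exactly the textbook argument one would supply, and the reduction of the PSD claim to the symmetrization identity $\bz^\t \bM \bz = \tfrac{1}{2}\bz^\t(\bM + \bM^\t)\bz$ is also correct. You were right to flag the domain subtlety: for the limiting argument one needs $\bw + t\bz \in \MZ$ for small $t > 0$, which holds when $\bw$ is interior to $\MZ$ and automatically when $\MZ = \BR^n$, the setting in which the paper actually invokes this lemma.
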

\begin{lemma}
  \label{lem:xy_ineq}
  Let $\bX, \bY \in \BR^{n \times n}$ be any square matrices. Then
  \begin{equation}
    \label{eq:xy_ineq}
\bX \bX^\t \preceq 2 \bY \bY^\t + 2 \| \bX - \bY \|_\sigma^2 \cdot I.
\end{equation}
\end{lemma}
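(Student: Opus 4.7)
The plan is to verify the PSD inequality pointwise, i.e., to show that $v^\t \bX \bX^\t v \leq 2 v^\t \bY \bY^\t v + 2 \| \bX - \bY \|_\sigma^2 \| v \|^2$ for every $v \in \BR^n$. Rewriting each side, this is equivalent to the norm inequality
$$
\| \bX^\t v \|^2 \leq 2 \| \bY^\t v \|^2 + 2 \| \bX - \bY \|_\sigma^2 \| v \|^2.
$$

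To establish this, I would decompose $\bX^\t v = \bY^\t v + (\bX - \bY)^\t v$ and apply the elementary inequality $\| a + b \|^2 \leq 2 \| a \|^2 + 2 \| b \|^2$ (a consequence of AM-GM, or equivalently of $(a-b)^\t(a-b) \succeq 0$). This yields
$$
\| \bX^\t v \|^2 \leq 2 \| \bY^\t v \|^2 + 2 \| (\bX - \bY)^\t v \|^2.
$$
Finally, I would bound $\| (\bX - \bY)^\t v \| \leq \| (\bX - \bY)^\t \|_\sigma \cdot \| v \| = \| \bX - \bY \|_\sigma \cdot \| v \|$, using that the spectral norm of a matrix equals that of its transpose. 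Combining these gives the desired pointwise bound, and since $v$ was arbitrary, the PSD ordering in \eqref{eq:xy_ineq} follows.

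There is no real obstacle here; the lemma is a two-line linear-algebraic fact. The only thing to double-check is that the factor of $2$ on both terms arises from the single application of the $(a+b)^2 \leq 2a^2 + 2b^2$ inequality, which is sharp up to this constant and precisely what is used later in the proof of Lemma \ref{lem:ab_diff}.
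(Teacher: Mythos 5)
Your proof is correct and follows essentially the same approach as the paper: reduce the PSD ordering to a pointwise quadratic-form inequality, decompose $\bX^\t \bv$ as $\bY^\t \bv + (\bX - \bY)^\t \bv$, apply $\|a+b\|^2 \leq 2\|a\|^2 + 2\|b\|^2$, and bound the remainder by the spectral norm. No gaps.
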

\begin{proof}(of Lemma \ref{lem:xy_ineq})
  Note that for any real numbers $x,y$ we have that $x^2 = (y + (x-y))^2 \leq 2y^2 + 2(x-y)^2$. It follows that for any vector $\bv \in \BR^n$,
  \begin{align*}
    \| \bX^\t \bv \|^2 &\leq 2\| \bY^\t \bv \|^2 + 2 \| \bX^\t \bv - \bY^\t \bv \|^2 \\
                       & \leq 2 \| \bY^\t \bv \|^2 + 2 \| \bX^\t - \bY^\t \|_\sigma^2 \| \bv \|^2,
  \end{align*}
  which establishes (\ref{eq:xy_ineq}).
\end{proof}

\begin{lemma}
  \label{lem:sr_ineq}
  Let $\bS, \bR \in \BR^{n \times n}$ be (symmetric) PSD matrices. Then
  \begin{equation}
    \label{eq:sr_ineq}
\bS \bR + \bR \bS \preceq 4 \bS^2 + 4 \| \bS - \bR \|_\sigma^2 \cdot I.
  \end{equation}
\end{lemma}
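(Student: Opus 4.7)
The plan is to reduce the inequality to the elementary operator inequality that for symmetric matrices $\bX,\bY$, one has $\bX\bY+\bY\bX \preceq \bX^2+\bY^2$, which is just a rewriting of $(\bX-\bY)^2\succeq 0$. Since $\bS$ and $\bR$ are symmetric, so is the ``error'' $\bE := \bR-\bS$, so this identity is applicable to the pair $(\bS,\bE)$.

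First I would write $\bR = \bS + \bE$ and expand
\[
\bS\bR + \bR\bS \;=\; 2\bS^2 + (\bS\bE + \bE\bS).
\]
Then I would apply $\bS\bE + \bE\bS \preceq \bS^2 + \bE^2$, which follows from $(\bS-\bE)^2\succeq 0$ together with symmetry of $\bS$ and $\bE$. Finally, since $\bE$ is symmetric, $\bE^2 \preceq \|\bE\|_\sigma^2\cdot I$ by spectral decomposition. Chaining these three steps yields
\[
\bS\bR+\bR\bS \;\preceq\; 3\bS^2 + \|\bS-\bR\|_\sigma^2\cdot I,
\]
which is strictly stronger than the claimed bound $4\bS^2 + 4\|\bS-\bR\|_\sigma^2 \cdot I$; loosening the constants from $(3,1)$ to $(4,4)$ finishes the proof.

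There is essentially no obstacle here: the only subtlety is to notice that although $\bR$ and $\bS$ individually are PSD, the difference $\bE=\bR-\bS$ need not be, but what we actually use is its \emph{symmetry}, not any sign condition on its eigenvalues. The PSD hypotheses on $\bS$ and $\bR$ play no role in the argument beyond ensuring well-definedness and symmetry (in particular, the constants $4,4$ in the statement leave plenty of slack, so the proof could clearly have been tightened, as observed above).
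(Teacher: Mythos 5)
Your proof is correct and in fact yields a slightly stronger bound, $3\bS^2 + \|\bS - \bR\|_\sigma^2\, I$. It is essentially the same idea as the paper's, just presented at the operator level rather than via the quadratic form: the paper evaluates $\bv^\t(\bS\bR+\bR\bS)\bv = 2\langle \bR\bv,\bS\bv\rangle$ and bounds it coordinatewise using the scalar inequality $rs \le 2s^2 + 2(r-s)^2$, whereas you first decompose $\bR = \bS + \bE$ and invoke the operator inequality $\bS\bE+\bE\bS \preceq \bS^2 + \bE^2$ coming from $(\bS-\bE)^2 \succeq 0$, followed by $\bE^2 \preceq \|\bE\|_\sigma^2 I$. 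The paper's scalar estimate is loose (the sharp version $rs \le \tfrac{3}{2}s^2 + \tfrac12(r-s)^2$ would recover your constants exactly), which is why you improve the pair $(4,4)$ to $(3,1)$; the weaker $(4,4)$ bound is all that is needed downstream in Lemma \ref{lem:ab_diff}, so the paper does not bother to optimize. Your observation that positive semidefiniteness of $\bS,\bR$ is never used — only their symmetry, together with $\bS^2 \succeq 0$, matters — is also correct, and applies equally to the paper's proof.
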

\begin{proof}(of Lemma \ref{lem:sr_ineq})
  Note that for any real numbers $r,s$ we have that $rs \leq 2s^2 + 2(r-s)^2$. It follows that for any $\bv \in \BR^n$,
  $$
2 \lng \bR \bv, \bS \bv \rng \leq 4 \| \bS \bv \|^2 + 4 \| \bR \bv - \bS \bv \|^2 \leq 4 \| \bS \bv \|^2 + 4 \| \bS - \bR \|_\sigma^2 \| \bv \|^2.
  $$
\end{proof}

\subsection{Proof of Lemma \ref{lem:ab_exist}}
\label{sec:ab_exist_proof}
\begin{proof}(of Lemma \ref{lem:ab_exist}).
  Since $F$ is continuously differentiable, by the fundamental theorem of calculus, for all $\bz$, 
  $$
F(\bz - \eta F(\bz)) = F(\bz) - \int_0^1 \partial F(\bz - (1-\alpha) \eta F(\bz)) \cdot \eta F(\bz) d\alpha,
$$
so if we set
$$
\bB_\bz = \int_0^1 \partial F(\bz - (1-\alpha) \eta F(\bz)) d\alpha,
$$
then we have $F(\bz - \eta F(\bz)) = F(\bz) - \eta \bB_\bz F(\bz)$. Again using the fundamental theorem of calculus,
$$
F(\bz - \eta F(\bz-  \eta F(\bz))) = F(\bz) - \eta \int_0^1 \partial F(\bz - (1-\alpha) \eta F(\bz - \eta F(\bz))) F(\bz - \eta F(\bz)) d\alpha.
$$
Then if we set
$$
\bA_\bz = \int_0^1 \partial F(\bz - (1-\alpha) \eta F(\bz - \eta F(\bz))) d\alpha,
$$
then
\begin{align*}
  F(\bz - \eta F(\bz - \eta F(\bz))) =& F(\bz) - \eta \bA_\bz F(\bz - \eta F(\bz))\\
  =& F(\bz) - \eta \bA_\bz (F(\bz) - \eta \bB_\bz F(\bz)) \\
  =& F(\bz) - \eta \bA_\bz F(\bz) + \eta^2 \bA_\bz \bB_\bz F(\bz).
\end{align*}
Note that $\bA_\bz, \bB_\bz$ have spectral norms at most $L$ and $\bA_\bz + \bA_\bz^\t$, $\bB_\bz + \bB_\bz^\t$ are PSD since the same is true of the matrices $\partial F(\bz - (1-\alpha) \eta F(\bz - \eta F(\bz)))$ and $\partial F(\bz - (1-\alpha) \eta F(\bz))$ (here we are using Lemma \ref{lem:nspsd}). Finally, since $F$ has a $\Lambda$-smooth Jacobian, we have that
\begin{align}
  \| \bA_\bz - \bB_\bz \|_\sigma &\leq \int_0^1 \| \partial F(\bz - (1-\alpha) \eta F(\bz)) - F(\bz - (1-\alpha) \eta F(\bz - \eta F(\bz))) \|_\sigma d\alpha\nonumber\\
                         & \leq \int_0^1 (1-\alpha) \eta \Lambda \| F(\bz) - F(\bz - \eta F(\bz)) \| d\alpha\nonumber\\
                         & = \frac{\eta \Lambda}{2} \| F(\bz) - F(\bz - \eta F(\bz))\|.\nonumber
\end{align}
\end{proof}

\subsection{Proof of Lemma \ref{lem:ab_diff}}
\label{sec:ab_diff_proof}
\begin{proof}(of Lemma \ref{lem:ab_diff}).
 Set $L_0 = \max \{ \| \bA \|_\sigma, \| \bB \|_\sigma \}$.  We wish to show that
  \begin{align*}
     (I - \bA + \bA \bB)(I - \bA + \bA \bB)^\t \preceq I \cdot (1 + 26 \| \bA - \bB \|_\sigma^2),
  \end{align*}
  or equivalently that
  $$
(\bA + \bA^\t) - (\bA \bB + \bB^\t \bA^\t) - \bA \bA^\t + (\bA \bB \bA^\t + \bA \bB^\t \bA^\t) - \bA \bB \bB^\t \bA^\t \succeq - 26 \| \bA - \bB \|_\sigma^2 I.
$$
Notice that $\bA \bB \bA^\t + \bA \bB^\t \bA^\t \succeq 0$ since for any vector $\bv \in \BR^n$, we have $\bv^\t \bA (\bB + \bB^\t) \bA^\t \bv \geq 0$ as $\bB + \bB^\t \succeq 0$. Moreover, since $\bB \bB^\t \preceq L_0^2 \cdot I$, we have that for any $\bv \in \BR^n$, $\bv^\t \bA \bB \bB^\t \bA^\t \bv \leq L_0^2 \cdot \bv^\t \bA \bA^\t \bv$, and so $\bA \bB \bB^\t \bA^\t \preceq L_0^2 \cdot \bA \bA^\t$. Thus it suffices to show
\begin{equation}
  \label{eq:ab_desired}
(\bA + \bA^\t) - (\bA \bB + \bB^\t \bA^\t) - (1 + L_0^2) \cdot \bA \bA^\t  \succeq - 26 \| \bA - \bB \|_\sigma^2 I.
\end{equation}
Next write $\bM := (\bA - \bA^\t)/2, \bS := (\bA + \bA^\t)/2, \bN := (\bB - \bB^\t)/2, \bR := (\bB + \bB^\t)/2$. Then $\bR, \bS$ are positive semi-definite and $\bM, \bN$ are anti-symmetric (i.e., $\bM^\t = -\bM, \bN^\t = - \bN$). Also note that $\| \bR - \bS \|_\sigma \leq \| \bA - \bB \|_\sigma$ and $\| \bM - \bN \|_\sigma \leq \| \bA - \bB \|_\sigma$. Then we have:
\begin{align*}
  \bA \bA^\t &= (\bM +\bS)(\bM^\t + \bS^\t) = \bM \bM^\t + \bM \bS + \bS \bM^\t + \bS \bS \\
  \bA \bB &= (\bM + \bS)(\bN + \bR) = \bM \bN + \bM \bR + \bS \bN + \bS \bR \\
  & \qquad \qquad \qquad \qquad \quad  = -\bM \bN^\t + \bM \bR - \bS \bN^\t + \bS \bR \\
  \bB^\t \bA^\t &= (\bN^\t + \bR^\t)(\bM^\t + \bS^\t) = \bN^\t \bM^\t + \bN^\t \bS + \bR \bM^\t + \bR \bS \\
  & \qquad \qquad \qquad \qquad \qquad \quad \  \  = -\bN \bM^\t - \bN \bS + \bR \bM^\t + \bR \bS.
\end{align*}
Next, note that for any vector $\bv \in \BR^n$ and any real number $\ep > 0$, we have
\begin{align*}
  \lng \bv, (\bM \bS + \bS \bM^\t) \bv \rng &= 2 \lng \bS \bv, \bM^\t \bv \rng \\
                                            & = 2 \sum_{j=1}^n (\bS \bv)_j \cdot (\bM^\t \bv)_j \\
  \text{(Young's inequality) } & \leq 2 \sum_{j=1}^n \frac{\ep\cdot (\bM^\t \bv)_j^2}{2} + \frac{(\bS \bv)_j^2}{2\ep} \\
                                            & = \ep \cdot \| \bM^\t \bv \|_2^2 + \frac{\| \bS \bv \|_2^2}{\ep}. 
\end{align*}
Thus $\bM \bS + \bS \bM^\t \preceq \ep \cdot \bM \bM^\t + \frac{\bS^2}{\ep}$. Replacing $\bM$ with $-\bN$ gives that for all $\ep > 0$, $-\bN \bS - \bS \bN^\t \preceq \ep \cdot \bN \bN^\t + \frac{\bS^2}{\ep}$, and replacing $\bS$ with $\bR$ gives that for all $\ep > 0$, $\bM \bR + \bR \bM^\t \preceq \ep \cdot \bM \bM^\t + \frac{\bR^2}{\ep}$. Hence
\begin{align*}
   (1 + L_0^2) \cdot \bA & \bA^\t + \bA \bB + \bB^\t \bA^\t \\
  & \preceq (1 + L_0^2) \cdot \left( (1 + \ep) \bM \bM^\t + (1 + \frac{1}{\ep}) \bS^2 \right) - \bM \bN^\t - \bN \bM^\t + \bS \bR + \bR \bS \\
  & \qquad \qquad \qquad \qquad \qquad \qquad + \ep \cdot \bN \bN^\t + \frac{\bS^2}{\ep} + \ep \cdot \bM \bM^\t + \frac{\bR^2}{\ep}.
\end{align*}
By Lemma \ref{lem:xy_ineq} with $\bX = \bR, \bY = \bS$ and Lemma \ref{lem:sr_ineq}, we have that
\begin{align*}
   \bS \bR + \bR \bS + \ep \cdot \bN \bN^\t & + \frac{\bS^2}{\ep} + \ep \cdot \bM \bM^\t + \frac{\bR^2}{\ep} \\
  & \preceq \ep \cdot (\bN \bN^\t  + \bM \bM^\t) +\left(4 +\frac{1}{\ep} + \frac{2}{\ep} \right)\cdot  \bS^2 + \left(4 + \frac 2\ep\right) \cdot \| \bR - \bS \|_\sigma^2 \cdot I, 
\end{align*}
so
\begin{align}
   (1 + L_0^2) \cdot \bA \bA^\t + \bA \bB & + \bB^\t \bA^\t \nonumber\\
  \label{eq:ab1}
  \preceq & \ep \cdot \bN \bN^\t + ((1 + L_0^2)(1 + \ep) + \ep) \bM \bM^\t - \bN \bM^\t - \bM \bN^\t \nonumber \\
   & \qquad + \left( (1 + L_0^2)\left(1+\frac 1\ep\right) + 4 + \frac{3}{\ep} \right) \bS^2 +  \left(4 + \frac 2\ep\right) \cdot \| \bA - \bB \|_\sigma^2 \cdot I.
\end{align}
Next, note that as long as $5\ep + 2L_0^2 + 2\ep L_0^2 \leq 1$, we have that
\begin{align}
  \ep \cdot \bN \bN^\t + ((1 + L_0^2)(1 & + \ep) + \ep) \bM \bM^\t - \bN \bM^\t  - \bM \bN^\t \nonumber\\
 \preceq^{(^*1)} & (\ep + 2 \cdot (2\ep + L_0^2 + \ep L_0^2)) \cdot \bN \bN^\t + \bM \bM^\t - \bN \bM^\t - \bM \bN^\t\nonumber\\
  & + 2 \cdot (2\ep + L_0^2 + \ep L_0^2) \cdot \| \bM - \bN \|_\sigma^2 I\nonumber\\
  \preceq & \bN \bN^\t + \bM \bM^\t - \bN \bM^\t - \bM \bN^\t + (1-\ep) \| \bA - \bB \|_\sigma^2 I \nonumber\\
  = & (\bN - \bM)(\bN^\t - \bM^\t) + (1-\ep) \| \bA - \bB \|_\sigma^2 I \nonumber\\
  \preceq & \| \bN - \bM \|_\sigma^2 I + (1-\ep) \| \bA - \bB \|_\sigma^2 I \nonumber\\
  \label{eq:ab2}
\preceq & (2-\ep) \| \bA - \bB \|_\sigma^2 I.
\end{align}
 where $(^*1)$ follows from Lemma \ref{lem:xy_ineq} with $\bX = \bM, \bY = \bN$. 
Moreover, as long as $$L_0 \left ((1 + L_0^2)(1 + (1/\ep)) + 4 + (3/\ep)\right) \leq 2,$$ since $\| \bS \|_\sigma \leq \| \bA \|_\sigma \leq L_0$, we have that
\begin{equation}
  \label{eq:ab3}
\left( (1 + L_0^2)\left(1+\frac 1\ep\right) + 4 + \frac{3}{\ep} \right) \bS^2 \preceq 2 \bS = \bA + \bA^\t.
\end{equation}
Combining (\ref{eq:ab1}), (\ref{eq:ab2}), and (\ref{eq:ab3}) gives that
$$
(1 + L_0^2) \cdot \bA \bA^\t + \bA \bB + \bB^\t \bA^\t \preceq \left(6 + \frac 2\ep \right) \| \bA - \bB \|_\sigma^2 I + \bA + \bA^\t,
$$
which is equivalent to (\ref{eq:ab_desired}) as long as $\ep = 1/10$. 

Finally, note that as long as $L_0 \leq 1/30$, the choice $\ep = 1/10$ satisfies $5\ep + 2L_0^2 + 2\ep L_0^2 \leq 1$ and $L_0 ((1 + L_0^2) (1 + 1/\ep) + 4 + 3/\ep) \leq 2$, completing the proof.

\if 0
It is easily verified that $ab \leq 2(b-a)^2 + 2a^2$ for all real numbers $a,b$. It follows that for any $\bv \in \BR^n$,
\begin{align*}
  \bv^\t \bA \bB \bv &= \todo{\lng \bA \bv, \bB \bv \rng} \\
                     & = \sum_{j=1}^n (\bA \bv)_j \cdot (\bB \bv)_j \\
                     & \leq \sum_{j=1}^n 2(\bA \bv - \bB \bv)_j^2 + 2 (\bA \bv)_j^2 \\
  & = 2 \| (\bA - \bB) \bv \|^2 + \| \bA \bv \|^2
\end{align*}
\fi
\end{proof}

\section{Proof of Last Iterate convergence of Proximal Point}
\label{sec:pp_ub}

We first prove the following lemma which shows that the Hamiltonian decreases each iteration of the proximal point algorithm:
\begin{lemma}
  \label{lem:approx_monotone}
  Suppose that $F : \BR^D \ra \BR^D$ is a monotone operator. Then 
  $$
\| F(\bx) \|^2  \leq \| F(\bx + \eta F(\bx)) \|^2.
$$
\end{lemma}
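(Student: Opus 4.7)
The plan is straightforward: exploit monotonicity of $F$ with the two points $\bx$ and $\bx + \eta F(\bx)$, then close the argument with Cauchy--Schwarz.

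First I would apply the monotonicity inequality $\langle F(\bu) - F(\bv), \bu - \bv \rangle \geq 0$ with the choices $\bu = \bx + \eta F(\bx)$ and $\bv = \bx$, so that $\bu - \bv = \eta F(\bx)$. This yields
\[
\eta \langle F(\bx + \eta F(\bx)) - F(\bx),\, F(\bx)\rangle \geq 0,
\]
which, since $\eta > 0$, rearranges to
\[
\langle F(\bx + \eta F(\bx)),\, F(\bx)\rangle \;\geq\; \|F(\bx)\|^2.
\]

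Next I would apply Cauchy--Schwarz to the left-hand side, obtaining
\[
\|F(\bx + \eta F(\bx))\| \cdot \|F(\bx)\| \;\geq\; \|F(\bx)\|^2.
\]
If $\|F(\bx)\| = 0$ the conclusion is immediate; otherwise, dividing by $\|F(\bx)\|$ and squaring gives the desired bound $\|F(\bx)\|^2 \leq \|F(\bx + \eta F(\bx))\|^2$.

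There is no real obstacle here: the only thing to be careful about is the degenerate case $F(\bx) = 0$, which is handled separately in one line. The reason this yields monotonic decrease of the Hamiltonian along the proximal point iterates is that the update $\bz^{(t+1)} = \bz^{(t)} - \eta F(\bz^{(t+1)})$ can be rewritten as $\bz^{(t)} = \bz^{(t+1)} + \eta F(\bz^{(t+1)})$; applying the lemma with $\bx = \bz^{(t+1)}$ then gives $\|F(\bz^{(t+1)})\|^2 \leq \|F(\bz^{(t)})\|^2$, which is precisely the monotonicity property needed in the PP convergence argument referenced in Appendix \ref{sec:pp_ub}.
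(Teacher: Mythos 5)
Your proof is correct and rests on exactly the same key step as the paper: applying monotonicity to the pair $\bx$ and $\bx + \eta F(\bx)$ to obtain $\langle F(\bx + \eta F(\bx)) - F(\bx),\, F(\bx)\rangle \geq 0$. The only difference is in how you close—you finish with Cauchy--Schwarz (which forces the small case split on $F(\bx)=0$), whereas the paper expands the identity $\|F(\bx + \eta F(\bx))\|^2 - \|F(\bx)\|^2 = 2\langle F(\bx),\, F(\bx + \eta F(\bx)) - F(\bx)\rangle + \|F(\bx + \eta F(\bx)) - F(\bx)\|^2 \geq 0$, which sidesteps the degenerate case entirely; both are fine.
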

\begin{proof}
  By monotonicity of $F$ we have that, for $\eta > 0$, $\lng F(\bx), F(\bx + \eta F(\bx)) - F(\bx) \rng \geq 0$. Now note that
  \algnst{
    & \| F(\bx + \eta F(\bx))\|^2 - \| F(\bx) \|^2 \\
    & = 2 \lng F(\bx), F(\bx + \eta F(\bx)) - F(\bx) \rng + \| F(\bx + \eta F(\bx)) - F(\bx) \|^2 \\
    & \geq 0.
    }
  \end{proof}
  Theorem \ref{thm:pp_ub} gives an analogue of Theorem \ref{thm:eg_ub} for the PP algorithm. Given Lemma \ref{lem:approx_monotone}, its proof is essentially immediate given prior results in the literature (see, e.g., \cite{mokhtari_convergence_2019,monteiro_complexity_2010}), but we reproduce the entire proof for completeness.
  \begin{theorem}
    \label{thm:pp_ub}
    Suppose $F : \BR^n \ra \BR^n$ is a monotone operator. Fix some $\bz\^0 \in \BR^n$, and suppose there is $\bz^* \in \BR^n$ so that $F(\bz^*) = 0$ and $\| \bz^* - \bz\^0 \| \leq D$. 
    If the proximal point algorithm with any step size $\eta > 0$ is initialized at $\bz\^0$, then its iterates $\bz\^T$ satisfy
  $$
\| F(\bz\^t) \| \leq \frac{D}{\eta \sqrt{T}}.
$$
If moreover $\MZ = \MB(\bx^*, D) \times \MB(\by^*, D)$ and $F(\bx, \by) = \matx{ \nabla_\bx f(\bx,\by) \\ - \nabla_\by f(\bx,\by) }$ for a convex-concave function $f$, then it follows that
$$
\Gap_f^\MZ(\bx\^T, \by\^T) = \max_{\by' \in \MB(\by^*, D)} f(\bx\^T, \by') - \min_{\bx' \in \MB(\bx^*, D)} f(\bx', \by\^T) \leq \frac{\sqrt{2}D^2}{\eta \sqrt{T}}.
$$
(Here $\bz\^T = (\bx\^T, \by\^T)$.)
\end{theorem}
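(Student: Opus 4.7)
The plan is to combine two ingredients: (i) the standard telescoping bound for proximal point, which gives an $O(1/\sqrt{T})$ rate on the \emph{minimum} Hamiltonian among the first $T$ iterates, and (ii) Lemma \ref{lem:approx_monotone}, which upgrades this minimum to a bound on the \emph{last} iterate since $\|F(\bz^{(t)})\|$ is non-increasing along PP trajectories. This parallels the structure of the proof of Theorem \ref{thm:eg_ub}, but is simpler because the monotone-decrease statement for PP is exact (no need for Assumption \ref{asm:lip_hes}).

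First I would establish the one-step contraction toward $\bz^*$. From $\bz^{(t+1)} = \bz^{(t)} - \eta F(\bz^{(t+1)})$ and $F(\bz^*) = 0$, expanding $\|\bz^{(t+1)} - \bz^*\|^2$ and using monotonicity in the form
\[
\langle F(\bz^{(t+1)}) - F(\bz^*),\, \bz^{(t+1)} - \bz^* \rangle \geq 0,
\]
combined with the identity $\bz^{(t)} - \bz^* = (\bz^{(t+1)} - \bz^*) + \eta F(\bz^{(t+1)})$, yields
\[
\|\bz^{(t+1)} - \bz^*\|^2 \leq \|\bz^{(t)} - \bz^*\|^2 - \eta^2 \|F(\bz^{(t+1)})\|^2.
\]
Telescoping this from $t = 0$ to $T-1$ and using $\|\bz^{(0)} - \bz^*\| \leq D$ gives $\sum_{t=1}^{T} \eta^2 \|F(\bz^{(t)})\|^2 \leq D^2$, so some iterate $t^* \in \{1,\ldots,T\}$ satisfies $\|F(\bz^{(t^*)})\|^2 \leq D^2/(\eta^2 T)$.

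Second, I would invoke Lemma \ref{lem:approx_monotone} to promote the bound to the last iterate. Substituting $\bx = \bz^{(t+1)}$ into the lemma, so that $\bx + \eta F(\bx) = \bz^{(t)}$, shows $\|F(\bz^{(t+1)})\|^2 \leq \|F(\bz^{(t)})\|^2$ for every $t$. Thus $\|F(\bz^{(T)})\| \leq \|F(\bz^{(t^*)})\| \leq D/(\eta\sqrt{T})$, which is the first claim.

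Finally, the primal-dual gap bound follows exactly as in the end of the proof of Theorem \ref{thm:eg_ub}: for convex-concave $f$, linearizing at $\bz^{(T)}$ bounds
\[
\max_{\by' \in \MB(\by^*, D)} f(\bx^{(T)}, \by') - \min_{\bx' \in \MB(\bx^*, D)} f(\bx', \by^{(T)}) \leq \max_{\bz' \in \MZ} \langle F(\bz^{(T)}), \bz^{(T)} - \bz' \rangle \leq \sqrt{2}\,D\,\|F(\bz^{(T)})\|,
\]
and the first part gives the desired $\sqrt{2}D^2/(\eta \sqrt{T})$. There is no real obstacle here: the only subtlety is keeping the indexing straight when applying Lemma \ref{lem:approx_monotone} (it takes the ``implicit'' step $\bz^{(t+1)} \mapsto \bz^{(t+1)} + \eta F(\bz^{(t+1)}) = \bz^{(t)}$ rather than the ``explicit'' one), which is why monotone decrease of the Hamiltonian holds for PP without any Lipschitzness of $\partial F$.
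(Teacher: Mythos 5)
Your proposal is correct and follows essentially the same route as the paper's proof: the telescoped contraction $\|\bz\^{t+1}-\bz^*\|^2 \le \|\bz\^t-\bz^*\|^2 - \eta^2\|F(\bz\^{t+1})\|^2$ to locate a good iterate $t^*$, Lemma \ref{lem:approx_monotone} applied at $\bx=\bz\^{t+1}$ (with $\bx+\eta F(\bx)=\bz\^t$) to get monotone decrease of $\|F(\bz\^t)\|$, and the linearization argument for the primal-dual gap. No gaps.
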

\begin{proof}
\noindent Recall that the iterates of the proximal point algorithm are defined by
  $$
\bz\^{t+1} = \bz\^t - \eta F(\bz\^{t+1}).
$$
  
  It is easy to see that the following equality holds at all iterations of the proximal point algorithm: for all $\bz \in \BR^D$, 
  $$
\lng F(\bz\^{t+1}), \bz\^{t+1} - \bz \rng =  \frac{1}{2\eta} \left( \| \bz\^t - \bz \|^2 - \| \bz\^{t+1} - \bz \|^2 - \| \bz\^t - \bz\^{t+1} \|^2 \right).
$$
Setting $\bz = \bz^*$, so that $\lng F(\bz'), \bz' - \bz^* \rng \geq 0$ for all $\bz'$, it follows that for any $T > 0$,
$$
\sum_{t = 0}^{T-1} \frac{\eta}{2} \| F(\bz\^{t+1}) \|^2 \leq \sum_{t=0}^{T-1} \frac{1}{2\eta} \left( \| \bz\^t - \bz \|^2 - \| \bz\^{t+1} - \bz \|^2\right) \leq \frac{1}{2\eta} \| \bz_0 - \bz\|^2 \leq \frac{1}{2\eta} D^2.
$$
(The last inequality follows since $\bz, \bz^* \in \MZ$, and the diameter of $\MZ$ is at most $D$.)
Thus, there exists some $t^* \in \{1, 2, \ldots, T \}$ so that
\begin{equation}
  \label{eq:min_hamilt}
 \| F(\bz\^{t^*}) \|^2 \leq \frac{D^2}{\eta^2 T}.
 \end{equation}
Next, Lemma \ref{lem:approx_monotone} with $\bx = \bz\^{t+1}$ gives that for each $t \geq 0$,
 $$
\| F(\bz\^{t+1} ) \|^2 \leq \| F(\bz\^{t+1} + \eta \bz\^{t+1})\|^2 \leq \| F(\bz\^t) \|^2.
$$
Thus
$$
\| F(\bz\^T) \| \leq \frac{D}{\eta \sqrt{T}}.
$$
If $\MZ = \MX \times \MY$ and $F(\bx,\by) =  \matx{ \nabla_\bx f(\bx,\by) \\ -\nabla_\by f(\bx,\by)} $, for some convex-concave function $f$, then
\begin{align*}
  \max_{\by' \in \MY} f(\bx\^T, \by') &- \min_{\bx' \in \MX} f(\bx', \by\^T) \nonumber \\
  & = \max_{\by' \in \MY} f(\bx\^T, \by') - f(\bx\^T, \by\^T) - \min_{\bx' \in \MX} (f(\bx', \by\^T) - f(\bx\^T, \by\^T)) \\
  & \leq \max_{\by' \in \MY} \lng \nabla_\by f(\bx\^T, \by\^T), \by' - \by\^T \rng + \max_{\bx' \in \MX} \lng \nabla_\bx f(\bx\^T, \by\^T), \bx\^T - \bx' \rng \\
  & = \max_{z' \in \MZ} \lng F(\bz\^T), \bz\^T - \bz' \rng \\
  & \leq \| F(\bz\^T) \| \cdot D\sqrt{2} \\
  & \leq \frac{\sqrt{2} D^2}{\eta \sqrt{T}}.
\end{align*}
\end{proof}

\section{CLIs with time-varying coefficients}
\subsection{Averaged EG iterates are a 2-CLI with time-varying coefficients}
\label{sec:2scli-averaged}
In this section we show that the averaged iterates of extra-gradient can be written as the iterates of a particular 2-CLI\footnote{See \cite[Definition 2]{arjevani_iteration_2016} for a definition of 2-CLIs.} with time-varying coefficients. Let $\bz\^t$, $t \geq 0$ be the iterates of extragradient and $\bv\^t := \frac{\sum_{t'=0}^t \bz\^t}{t+1}$. Then $\bz\^t = (t+1) \bv\^t - t \bv\^{t-1}$, and so
\begin{align*}
  (t+2) \bv\^{t+1} &= (t+1) \bv\^t + \bz\^{t+1} \\
                   &= (t+1) \cdot \bv\^t + \bz\^t - \eta F(\bz\^t - \eta F(\bz\^t)) \\
                   &= 2(t+1) \bv\^t - t\bv\^{t-1} - \eta F((t+1) \bv\^t - t\bv\^{t-1} - \eta F((t+1)\bv\^t - t\bv\^{t-1})).
\end{align*}
Writing $F(\bz) = \bA \bz + \bb$ (i.e., restricting to the setting in which CLIs are defined) gives
\begin{align*}
\bv\^{t+1} = \left(2I - \eta \bA + (\eta\bA)^2\right) \cdot \frac{t+1}{t+2} \bv\^t - \left(I - \eta \bA + (\eta \bA)^2\right) \frac{t}{t+2} \bv\^{t-1} + \eta(-I + \eta \bA)\bb,
\end{align*}
which is a 2-CLI.

\subsection{Lower bound for last iterate of EG with time-varying coefficients}
\label{sec:eg-timevarying}
In this section we consider the iterates of the EG algorithm with time-varying coefficients, defined as follows:
\begin{align}
  \label{eq:eg-iterates-timevarying}
\bbz^{(t + 1/2)} = \bbz^{(t)} - \eta_t F(\bbz^{(t)}) \qquad \qquad \bbz^{(t + 1)} &= \bbz^{(t)} - \eta_t F(\bbz^{(t+1/2)}),
\end{align}
where $\eta_t > 0$ is a sequence of time-varying coefficients.

As in Theorem \ref{thm:1scli_lb}, for given $n, L, D$ and $F = F_f \in \MFbil_{n,L,D}$ with Nash equilibrium $\bz^* = (\bx^*, \by^*)$, set $\MZ = \MB(\bx^*, D) \times \MB(\by^*, D)$, so that for $\bz = (\bx, \by)$, $\Gap_f^\MZ(\bz) = \sup_{\by' : \| \by' - \by^*\| \leq D} f(\bx, \by') - \inf_{\bx' : \| \bx' - \bx^* \| \leq D} f(\bx', \by)$.
\begin{proposition}
  Fix any $n \in \BN$ and $L, D > 0$. For $t \in \BN$, let $\eta_t$ be a sequence with $\eta_t \in (0,1/L)$ for each $t$. Let $\bz\^t = (\bx\^t, \by\^t)$ denote the iterates of EG with step-sizes $\eta_t$, as in (\ref{eq:eg-iterates-timevarying}). Then there is some $F = F_f \in \MFbil_{n,L,D}$ such that $\Gap_f^\MZ(\bz\^T) \geq \frac{LD^2}{4\sqrt{T}}$. 
\end{proposition}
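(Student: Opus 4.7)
The plan is to reuse the same family of bilinear hard instances from the proof of Theorem~\ref{thm:1scli_lb}, namely $f(\bx,\by) = \bx^\t (\nu I) \by + \bb_1^\t \bx + \bb_2^\t \by$ with $\bb_1,\bb_2$ as in (\ref{eq:choose_A}), where $\nu \in (0,L]$ is a free parameter to be optimized at the end. After translating so that $\bz\^0 = \mathbf{0}$ (WLOG, by shifting the affine term), the associated operator $F(\bz) = \bA \bz + \bb$ has $\bA$ normal with eigenvalues $\pm \nu i$ and $\|\bA^{-1}\bb\|=D$. Then one unrolls a single EG step with variable step-size to get $\bz\^{t+1} = \bP_t \bz\^t - \eta_t(I - \eta_t \bA)\bb$, where $\bP_t := I - \eta_t \bA + (\eta_t \bA)^2$.

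The key observation is that the inhomogeneous term $-\eta_t(I - \eta_t \bA)\bb$ equals $(\bP_t - I)\bA^{-1}\bb$, so the shifted iterates $\bw\^t := \bz\^t + \bA^{-1}\bb = \bz\^t - \bz^*$ satisfy the clean telescoping recursion $\bw\^{t+1} = \bP_t \bw\^t$, giving $\bw\^T = \bigl(\prod_{t=0}^{T-1} \bP_t\bigr)\bA^{-1}\bb$. Since each $\bP_t$ is a polynomial in $\bA$ with real coefficients, all the $\bP_t$ commute with $\bA$, and by (\ref{eq:gap-ham-D}),
\begin{equation*}
\Gap_f^{\MZ}(\bz\^T) = D\|\bA \bw\^T\| = D\,\Bigl\|\Bigl(\prod_{t=0}^{T-1}\bP_t\Bigr)\bb\Bigr\|.
\end{equation*}
Because $\bA$ is normal with all eigenvalues of magnitude $\nu$, so is each $\bP_t$, and all of its singular values equal $\sqrt{(1-\eta_t^2\nu^2)^2 + \eta_t^2\nu^2} = \sqrt{1-\eta_t^2\nu^2 + \eta_t^4\nu^4}$; the same holds for the product. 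Combined with $\|\bb\| = \nu D$, this yields the exact formula
\begin{equation*}
\Gap_f^\MZ(\bz\^T) = \nu D^2 \prod_{t=0}^{T-1}\sqrt{1 - \eta_t^2\nu^2 + \eta_t^4\nu^4}.
\end{equation*}

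The remaining step is to lower bound this product and tune $\nu$. Using the elementary inequality $\ln(1 - u + u^2) \geq -u$ for $u \in [0,1]$ (applied with $u = \eta_t^2\nu^2 \leq \nu^2/L^2 \leq 1$), one gets $\Gap_f^\MZ(\bz\^T) \geq \nu D^2 \exp(-\nu^2 S/2)$ with $S := \sum_{t=0}^{T-1}\eta_t^2 \leq T/L^2$. Maximizing $g(\nu) = \nu \exp(-\nu^2 S/2)$ over $\nu \in (0,L]$ splits into two cases. If $S \geq 1/L^2$, the unconstrained optimum $\nu_\star = 1/\sqrt{S}$ lies in $(0,L]$ and gives $\Gap \geq D^2/\sqrt{eS} \geq LD^2/\sqrt{eT}$. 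Otherwise $S < 1/L^2$, and choosing $\nu = L$ gives $\Gap \geq LD^2 \exp(-L^2 S/2) \geq LD^2/\sqrt{e}$. In both cases $\Gap_f^\MZ(\bz\^T) \geq LD^2/(\sqrt{e}\sqrt{T}) > LD^2/(4\sqrt{T})$.

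The only subtle point is spotting the telescoping identity $(\bP_t - I)\bA^{-1}\bb = -\eta_t(I - \eta_t\bA)\bb$, which is what allows us to bypass the stationarity needed in (\ref{eq:zt_explicit}) and express the whole $T$-step trajectory as a single product of commuting normal matrices. Once that is in place the rest is routine algebra: the spectral calculation and the two-case optimization over $\nu$.
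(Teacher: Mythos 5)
Your proposal is correct and follows essentially the same route as the paper's proof: the same bilinear hard instance from (\ref{eq:choose_A}), the same reduction of $\bA\bz\^T+\bb$ to a product of commuting normal matrices $\prod_{t}\bC_{0,t}(\bA)\,\bb$ (you obtain it by telescoping the shifted iterates, the paper by induction), and the same spectral evaluation via (\ref{eq:gap-ham-D}). Your only substantive deviation is optimizing $\nu$ at the end with the bound $1-u+u^2\geq e^{-u}$ instead of fixing $\nu=L/\sqrt{T}$ and using $(1-1/T)^T\geq 1/4$; this yields a marginally better constant and covers $T=1$, which the paper's estimate excludes.
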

\begin{proof}
  The proof closely parallels that of Theorem \ref{thm:1scli_lb}. In particular, we choose $\bA, \bb$ as in (\ref{eq:choose_A}), with $\nu \in (0,L)$ to be specified below. Set $\bC_{0,t}(\bA) := I - \eta_t \bA + (\eta_t \bA)^2$. Then the update (\ref{eq:eg-iterates-timevarying}) can be written as
  $$
\bz\^t = \bC_{0,t}(\bA) \cdot \bz\^{t-1} - (I - \eta_t \bA) \cdot \eta_t \bb,
  $$
  We claim that for $t \geq 0$, 
  \begin{equation}
    \label{eq:hamt-timevarying}
\bA \bz\^t + \bb = \prod_{t'=0}^{t-1} \bC_{0,t'}(\bA) \cdot  \bb.
\end{equation}
To see (\ref{eq:hamt-timevarying}), we argue by induction, noting that the base case is immediate since $\bz\^0 = \bbzero$, and for the inductive step:
\begin{align}
  \bA \bz\^{t+1} + \bb&=\bA \bC_{0,t}(\bA) \bz\^{t-1}  - \bA(I - \eta_t \bA) \eta_t \bb + \bb\nonumber\\
                      &= \bA \bC_{0,t}(\bA) \bz\^{t-1}  + (\bC_{0,t}(\bA) - I) \bb + \bb\nonumber\\
                      &=\bC_{0,t}(\bA) \cdot (\bA \bz\^{t-1} + \bb).\nonumber
\end{align}
Thus, by (\ref{eq:gap_ratio}), we have
\begin{equation}
    \label{eq:gap-timevarying}
\Gap_f^\MZ(\bx\^t, \by\^t) = \frac{ D^2 \nu \left\| \prod_{t'=0}^{t-1} \bC_{0,t'}(\bA) \cdot \bb \right\|}{\| \bb \|}.
\end{equation}
Let us choose $\nu = L/\sqrt{T}$. It is straightforward to check that the singular values of $\prod_{t'=0}^{t-1} \bC_{0,t'}(\bA)$, which are equal to the magnitudes of its eigenvalues, are all equal to
\begin{align*}
  \left| \prod_{t'=0}^{t-1} (1 - \eta_t \nu i + (\eta_t \nu i)^2) \right| &= \prod_{t'=0}^{t-1} \left|(1 - \eta_t^2 \nu^2) + \eta_t \nu i \right| \\
                                                                          & \geq \prod_{t'=0}^{t-1} (1 - \eta_t^2 \nu^2) \geq (1 - 1/T)^T \geq 1/4
\end{align*}
for $T \geq 2$. For any nonzero choice of $\bb$, (\ref{eq:gap-timevarying}) thus gives that $\Gap_f^\MZ(\bz\^t) \geq \frac{D^2 L}{4\sqrt{T}}$. 
\end{proof}
\end{document}